\theoremstyle{plain}
\newtheorem{thm}{Theorem}
\newtheorem{lemma}{Lemma}
\theoremstyle{definition}
\newtheorem{defn}{Definition}
\newtheorem{prop}{Proposition}
\newtheorem{coro}{Corollary}
\definecolor{PG}{RGB}{20, 160, 40}
\definecolor{Purple}{RGB}{140, 30, 180}
\definecolor{TableYellow}{RGB}{255, 255, 230}
\definecolor{TableGray}{RGB}{230, 230, 230}
\newcommand{\gD}{\mathcal{D}}
\newcommand{\gC}{\mathcal{C}}
\newcommand{\gX}{\mathcal{X}}
\newcommand{\gI}{\mathcal{I}}
\newcommand{\gL}{\mathcal{L}}
\newcommand{\gM}{\mathcal{M}}
\newcommand{\gMp}{\mathcal{M}_\pi}
\newcommand{\qf}[2]{q(c_{#1}=c_{#2}|x_{#1},x_{#2};f)}
\newcommand{\qfm}[2]{q(c_{#1}=c_{#2}|x_{#1},x_{#2}\in\gMp;f)}
\newcommand{\qfmm}[2]{q(c_{#1}=c_{#2}|x_{#1},x_{#2}\in\gM;f)}
\newcommand{\pf}[2]{p(c_{#1}=c_{#2}|x_{#1},x_{#2})}
\newcommand{\qfo}[2]{q(c_{#1}=c_{#2}|x_{#1},x_{#2};f^*)}
\newcommand{\argmax}[1]{\underset{#1}{\arg\max}\;}
\newcommand{\argmin}[1]{\underset{#1}{\arg\min}\;}
\newlength{\torchindent}
\definecolor{commentcolor}{RGB}{110,154,155}
\newcommand{\pseudocode}[1]{\ttfamily\bfseries{#1}}
\newcommand{\STATEPTCOMMENT}[2]{\STATE{\hskip #1\torchindent}\textcolor{commentcolor}{\pseudocode{\# #2}}}
\newcommand{\INLINEPTCOMMENT}[1]{\hfill\textcolor{commentcolor}{\pseudocode{\:\# #1}}}
\newcommand{\STATEPTCODE}[2]{\STATE{\hskip #1\torchindent}\textcolor{black}{\pseudocode{#2}}}
\newcommand{\STATEPTBREAK}{\STATEPTCODE{0}{}}
\title{DUEL: Duplicate Elimination on Active Memory\\for Self-Supervised Class-Imbalanced Learning}
\author{
    Won-Seok Choi\textsuperscript{\rm 1},
    Hyundo Lee\textsuperscript{\rm 1},
    Dong-Sig Han\textsuperscript{\rm 1},
    Junseok Park\textsuperscript{\rm 1},
    Heeyeon Koo\textsuperscript{\rm 2},\\
    Byoung-Tak Zhang\textsuperscript{\rm 1,3,}\footnote{Corresponding author.}
}
\begin{document}

\maketitle

\begin{abstract}
Recent machine learning algorithms have been developed using well-curated datasets, which often require substantial cost and resources. On the other hand, the direct use of raw data often leads to overfitting towards frequently occurring class information. To address class imbalances cost-efficiently, we propose an active data filtering process during self-supervised pre-training in our novel framework, Duplicate Elimination (DUEL). This framework integrates an active memory inspired by human working memory and introduces distinctiveness information, which measures the diversity of the data in the memory, to optimize both the feature extractor and the memory. The DUEL policy, which replaces the most duplicated data with new samples, aims to enhance the distinctiveness information in the memory and thereby mitigate class imbalances. We validate the effectiveness of the DUEL framework in class-imbalanced environments, demonstrating its robustness and providing reliable results in downstream tasks. We also analyze the role of the DUEL policy in the training process through various metrics and visualizations.
\end{abstract}

\section{Introduction}

Recent machine learning algorithms are heavily influenced by the quantity and quality of data. However, when agents collect data in real-world environments, the class distribution of the unprocessed data is long-tailed, indicating that data from certain classes are acquired much more frequently than others~\citep{liu2019large}. When trained on such raw data without any processing, deep learning models tend to overfit to these \textit{frequent} classes. Therefore, adaptive data refinement during the training process is essential to mitigate class imbalances cost-efficiently. Traditional methods have been developed based on resampling~\citep{buda2018systematic,pouyanfar2018dynamic} and reweighting~\citep{cao2019learning,cui2019class,tan2020equalization} techniques. However, these methods require class information for each data point, which increases the cost of preprocessing and labeling. Even semi-supervised approaches~\citep{wei2021crest,kim2020distribution} still require a fine-tuned support set that can reflect the target data distribution. To address these issues, recent research~\citep{yang2020rethinking,liu2021self} has proposed self-supervised pretraining techniques that can be trained with minimally processed data and demonstrate improved performance in class-imbalanced environments.

Self-supervised learning~\citep{chen2020simple,he2020momentum,zbontar2021barlow}, which is the modernized form of metric learning~\citep{khosla2020supervised,sohn2016improved}, has the advantage of acquiring positive samples via augmentation which reflects the inductive bias in the data. Similarly, by using data in either a mini-batch or a memory as negative samples, SSL methods do not require class information to collect such negatives. 
However, in a class-imbalanced environment, we have observed that the relationships within and between classes are unevenly reflected when training an SSL model. This imbalance leads to performance degradation in both mini-batch and memory-based methods. Based on this empirical evidence, we claim that an active memory that can alleviate the imbalances between latent classes is necessary for self-supervised class-imbalanced learning.

In this context, we mimic a well-known human cognitive process to achieve an active memory. Human working memory~\citep{baddeley1999working,baddeley2012working} is a prominent cognitive concept that explains how humans deal with extreme class-imbalances via an active data filtering process. Figure \ref{fig1}.A shows the mechanism of human working memory. The Central Executive System (CES), which is a supervisory subsystem of working memory, \textit{inhibits dominant information}~\citep{miyake2000unity,wongupparaj2015relation} from perceived data and remembers it while maximizing the amount of information~\citep{baddeley1999working}. These cognitive phenomena support our hypothesis that eliminating the most duplicated data will increase the distinctiveness information within memory.

\begin{figure*}[tb]
    \begin{center}
        \includegraphics[width=0.8\textwidth]{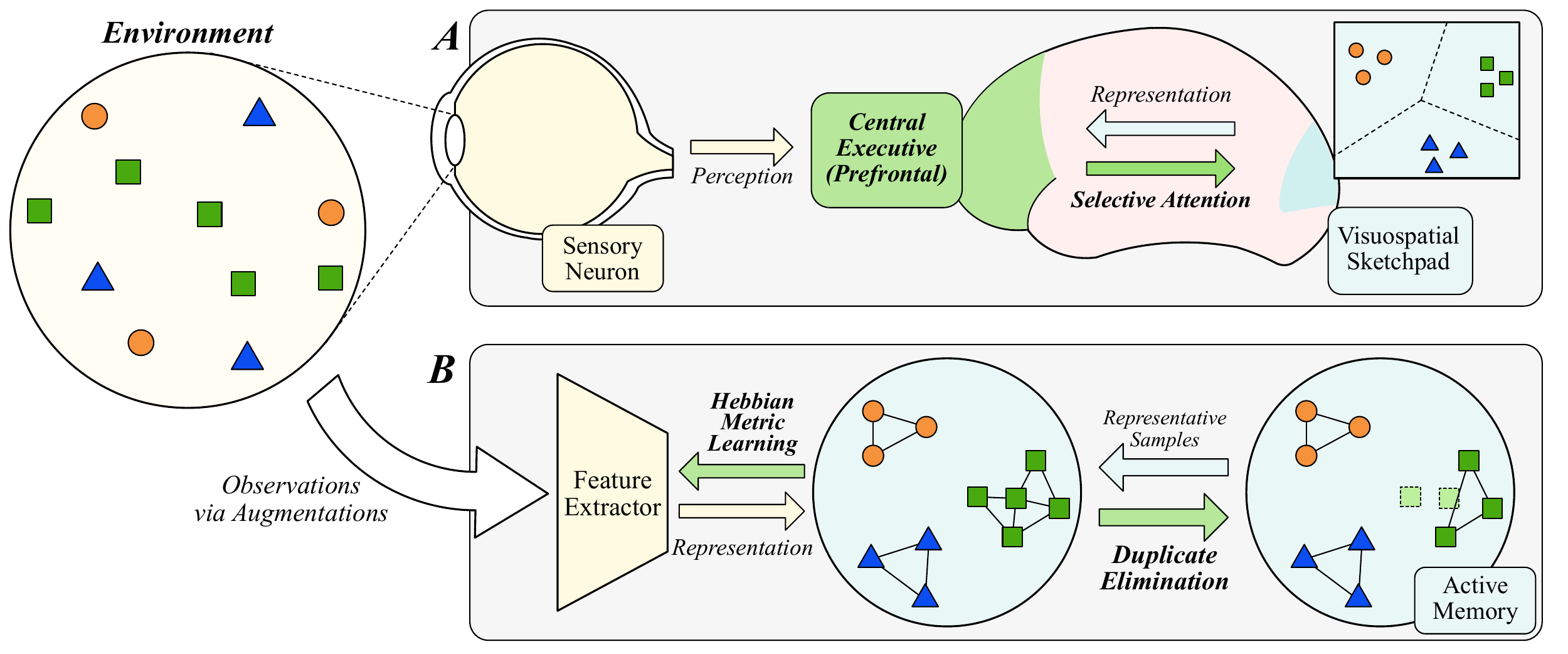}
        \caption{Visualizations of the concepts of working memory and our proposed DUEL framework. (A) Real-world agent \textit{perceives} data from the environment and maps the representation to solve the task. Working memory finds semantically duplicated signals and reduces them to maximizes the total amount of information. (B) Inspired by this cognitive process, we design the Duplicate Elimination (DUEL) framework. With mutual duplication probability, the representations form a graph structure (center) and are filtered out (right) to gradually maximize the distinctiveness information.}
        \label{fig1}
    \end{center}
\end{figure*}

To compute the amount of the information, we define \textit{distinctiveness} information, which measures how different a data point is from other data points. We also introduce Hebbian Metric Learning (HML) which directly optimizes \textit{distinctiveness} information while reducing information among co-fired similar data inspired by the characteristics of Hebbian learning~\citep{hebb2005organization,lowel1992selection}.
We show that to generalize HML in class-imbalanced environments, an active memory is essential. In this case, a policy for managing the memory should maximize the distinctiveness information in the memory.

As an implementation of our method, we propose the Duplicate Elimination (DUEL) framework, a novel SSL framework tailored for class-imbalanced environments.
Figure \ref{fig1}.B provides a conceptual visualization of the proposed DUEL framework.
The framework consists of two components: an active memory and a feature extractor. The feature extractor is trained with both the current data and the additional data from the active memory, while the active memory eliminates the most redundant data with the extracted representations. By iteratively updating both components, the DUEL framework can extract robust representations even in highly class-imbalanced environments.

This study provides the following contributions:
\begin{itemize}
  \item \textbf{Memory-integrated Hebbian Metric Learning.} We define HML which optimizes the information-based metric between the data from a Hebbian perspective. We show that an active memory that maximizes distinctiveness information is essential to extend HML to class-imbalanced environments.
  \item \textbf{Memory Management Policy.} Inspired by working memory, we design a memory management policy that eliminates the most duplicated element in the memory.
  \item \textbf{DUEL Framework.} We propose the DUEL framework for self-supervised class-imbalanced learning. To simulate class-imbalanced environments, we assume that one \textit{dominant} class occurs more frequently than others. In class-imbalanced environments, performance degradation has been observed with conventional self-supervised learning methods. On the other hand, even with the dramatically class-imbalanced data, the DUEL framework maintains stable performance in downstream tasks. We also validate the DUEL framework with more realistic environments with long-tailed class distributions and observe consistent results.
\end{itemize}

\section{Revisiting Metric Learning from a Hebbian-based Perspective}

In this section, we discuss Hebbian Metric Learning (HML), which allows us to represent the optimization problem of both the feature extractor and the memory from the same perspective. HML consists of Hebbian information and distinctiveness information terms, which aim to make representations of data with the same latent class similar while maximizing the diversity of information.

\subsection{Problem Definition}

The data distribution $\gD$ is a joint distribution of the observation $x\in\gX$ and its corresponding \textit{latent class} $c\in\gC$~\citep{saunshi2019theoretical,ash2021investigating,awasthi2022more}. Data with each latent class $c$ has a distinct data distribution $\gD_c$, and latent classes constitute the class distribution $c\sim\rho$. In this case, the joint distribution $p(x, c)$ can be expressed as follows:
\begin{equation*}
p(x,c):=\rho(c)\cdot \gD_{c}(x).
\end{equation*}

Our objective is to fit an estimated distribution $q(x,c;f)$ with a feature extractor $f:\gX \rightarrow \mathcal{Z}$ to the true distribution $p(x,c)$ by minizing the Kullback-Leibler divergence $D_{KL}(p(x,c)||q(x,c;f))$. 
However, since latent class is not directly accessible, indirect methods such as metric learning is needed to compute $q(x,c;f)$.

\subsection{Hebbian Metric Learning}

To represent $p(x,c)$ and $q(x,c;f)$ without directly using latent class, we define the probability that two data samples share the same latent class as mutual duplication probability. In this case, $\qf{i}{j}$ can be computed through a similarity metric of the representations of two data samples in the latent space.

\begin{defn}[Mutual duplication probability]
Let $(x_i,c_i)$, $(x_j,c_j)\sim\gD$. The mutual duplication probability $\qf{i}{j}$ with feature extractor $f$ is defined as follows:
\begin{equation}
\label{eqn-collision}
\qf{i}{j}:=\text{sim}^*(f(x_i),f(x_j)).
\end{equation}
\end{defn}
$\text{sim}^*$ denotes an arbitrary metric function which satisfies the property as the probability: the range should be bounded to $[0,1]$. With mutual duplication probability, the duplication density functions $P$ and $Q$ are derived via Bayes' theorem.
\begin{align*}
P(x_{i},x_{j}):=&\:p(x_i,x_j|c_{i}=c_{j})\\=&\:\frac{\pf{i}{j}p(x_i)p(x_j)}{\mathbb{E}_{x_k\sim \gD} \left[ \pf{i}{k}\right]}
\end{align*}
\begin{align*}
Q(x_{i},x_{j};f):=&\:q(x_i,x_j|c_{i}=c_{j};f)\\=&\:\frac{\qf{i}{j}p(x_i)p(x_j)}{\mathbb{E}_{x_k\sim \gD} \left[ \qf{i}{k}\right]}
\end{align*}

$P$ and $Q$ represent normalized joint distributions of two data samples sharing the same class. Through the use of \textit{message passing}~\citep{pearl1988probabilistic}, we can represent the two joint distributions, $p(x,c)$ and $q(x,c;f)$, using their density functions $P$ and $Q$. In Proposition \ref{prop_hml}, we show that minimizing $D_{\text{KL}}(p(x,c)||q(x,c;f))$ is equivalent to minimizing $D_{\text{KL}}(P||Q)$ and it becomes Hebbian Metric Learning.

\begin{prop}[Hebbian Metric Learning] Minimizing $D_{\text{KL}}(p(x,c)||q(x,c;f))$ is equivalent to minimizing $\gL_{\text{HML}}(f;\gD)$, which can be derived as:
\label{prop_hml}
\begin{align*}
\argmin{f}D_{\text{KL}}(p||q) &=\argmin{f}D_{\text{KL}}(P||Q) \\
&=\argmin{f}\underbrace{\left(\gI_h(f;\gD)-\gI_d(f;\gD)\right)}_{\gL_{\text{HML}}(f;\gD)}
\end{align*}
where $\gI_h(f;\gD)$ and $\gI_d(f;\gD)$ are denoted as \textit{Hebbian} information and \textit{Distinctiveness} information respectively.
\begin{align}
\label{hebbian}
\gI_h(f;\gD):=&\:\mathbb{E}_{x_i\sim \gD} \left[ I_h(x_i;f,\gD)\right] \\
\gI_d(f;\gD):=&\:\mathbb{E}_{x_i\sim \gD} \left[\gI_d(x_i;f,\gD)\right]
\label{dupinfo}
\end{align}
\begin{align*}
I_h(x_i;f,\gD)&:=\mathbb{E}_{x_j\sim \gD_i^+} \left[ -\log \qf{i}{j} \right]\\
\gI_d(x_i;f,\gD)&:=-\log \left( \mathbb{E}_{x_j\sim \gD} \left[ \qf{i}{j}\right]\right)
\end{align*}
\end{prop}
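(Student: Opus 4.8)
The plan is to establish the two equalities separately. The second, $\argmin{f}D_{\text{KL}}(P\|Q)=\argmin{f}\gL_{\text{HML}}(f;\gD)$, is a direct algebraic expansion that needs no extra modeling assumptions, so I would start there. The first, $\argmin{f}D_{\text{KL}}(p\|q)=\argmin{f}D_{\text{KL}}(P\|Q)$, is the message-passing reduction and is the genuine obstacle.

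For the second equality I would substitute the definitions of $P$ and $Q$ into the divergence and write the density ratio as
\begin{equation*}
\frac{P(x_i,x_j)}{Q(x_i,x_j;f)}=\frac{\pf{i}{j}}{\qf{i}{j}}\cdot\frac{\mathbb{E}_{x_k\sim\gD}[\qf{i}{k}]}{\mathbb{E}_{x_k\sim\gD}[\pf{i}{k}]},
\end{equation*}
so that the $p(x_i)p(x_j)$ factors cancel and $\log(P/Q)$ splits into four terms. The observation that keeps the bookkeeping clean is that the $x_i$-marginal of $P$ is exactly $\gD$: integrating out $x_j$ gives $p(x_i)\,\mathbb{E}_{x_j\sim\gD}[\pf{i}{j}]/\mathbb{E}_{x_k\sim\gD}[\pf{i}{k}]=p(x_i)$, since numerator and denominator are the same expectation. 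Taking $\mathbb{E}_{(x_i,x_j)\sim P}$, the term $-\mathbb{E}[\log\qf{i}{j}]$ becomes $\mathbb{E}_{x_i\sim\gD}\mathbb{E}_{x_j\sim\gD_i^+}[-\log\qf{i}{j}]=\gI_h(f;\gD)$ (as $\gD_i^+$ is precisely the conditional $P(\cdot\mid x_i)$), while $\mathbb{E}[\log\mathbb{E}_{x_k}[\qf{i}{k}]]$ depends only on $x_i$ and equals $\mathbb{E}_{x_i\sim\gD}[\log\mathbb{E}_{x_k\sim\gD}[\qf{i}{k}]]=-\gI_d(f;\gD)$. The two remaining terms involve only $p$ and are constant in $f$. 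Hence $D_{\text{KL}}(P\|Q)=\gI_h(f;\gD)-\gI_d(f;\gD)+\text{const}$, and the argmin discards the constant to yield $\gL_{\text{HML}}$.

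For the first equality I would make the message-passing structure explicit. Because $(x_i,c_i)$ and $(x_j,c_j)$ are drawn independently from $\gD$, conditioning on the observations and summing over the shared class gives $\pf{i}{j}=\sum_{c\in\gC}p(c\mid x_i)\,p(c\mid x_j)$: the collision probability is the inner product of the two class posteriors, and the identical identity with $q(c\mid x;f)$ holds for $\qf{i}{j}$, which is what licenses modeling it by $\text{sim}^*(f(x_i),f(x_j))$. Under this factorization, together with matching observation marginals $q(x;f)=\gD(x)=p(x)$, the per-sample posteriors — and hence $p(x,c)$ and $q(x,c;f)$ — are recoverable from $P$ and $Q$. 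I would then argue that both divergences are nonnegative and vanish at the same feature extractor: $D_{\text{KL}}(p\|q)=\mathbb{E}_{x\sim\gD}[D_{\text{KL}}(p(\cdot\mid x)\|q(\cdot\mid x;f))]=0$ iff $q(c\mid x;f)=p(c\mid x)$ a.e., which by the inner-product identity is equivalent to $\qf{i}{j}=\pf{i}{j}$ a.e., i.e. to $Q=P$ and $D_{\text{KL}}(P\|Q)=0$; sharing a minimizer, the two argmins coincide.

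The main obstacle is exactly this first step. The equivalence is transparent at the realizable optimum, but justifying it away from the optimum — that the two objectives induce the same ordering of feature extractors, not merely the same global minimizer — requires the identifiability assumption that the map from posteriors to pairwise collision statistics is injective on the relevant model class. I would therefore state this realizability/identifiability condition explicitly as the hypothesis under which the first equality holds, since without it one obtains only agreement of the global minima rather than of the full argmin sets.
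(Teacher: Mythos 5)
Your treatment of the second equality is essentially the paper's own proof: the paper likewise cancels the $p(x_i)p(x_j)$ factors, observes that $\pf{i}{j}p(x_j)/\mathbb{E}_{x_k\sim\gD}\left[\pf{i}{k}\right]$ integrates to one and names that density $\gD_i^+$, and splits the logarithm into $\gI_h(f;\gD)-\gI_d(f;\gD)$ (the paper works with the cross-entropy $-\int P\log Q$ and silently drops the $f$-independent entropy of $P$, which is the same bookkeeping as your ``constant in $f$'' terms). Where you genuinely diverge is the first equality. The paper proves a lemma expressing $p(x,c)$ and $q(x,c;f)$ as integrals of $P$ and $Q$ against the fixed kernel $p(c_j=c\mid x_j)$, i.e.\ $q(x_i,c_i=c;f)=\int_{\gX} p(c_j=c\mid x_j)Q(x_i,x_j;f)\,dx_j$, and then asserts in one sentence that one can therefore ``optimize on $q$ by optimizing $Q$ on $P$.'' Made precise, that argument is a data-processing statement and only yields $D_{\text{KL}}(p\|q)\le D_{\text{KL}}(P\|Q)$, not equality of argmins. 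Your route instead posits the factorization $\qf{i}{j}=\sum_{c}q(c\mid x_i;f)\,q(c\mid x_j;f)$ and argues that both divergences vanish at the same realizable $f$. Be aware that this factorization is not the paper's definition: the paper defines $\qf{i}{j}$ directly as $\text{sim}^*(f(x_i),f(x_j))$ and then \emph{derives} $q(c\mid x;f)$ from it by message passing, so your identity is an extra modeling hypothesis rather than a consequence of the setup. That said, your explicit realizability/identifiability caveat is well placed: neither your argument nor the paper's establishes that the two objectives order non-optimal feature extractors identically, and the downstream results (the optimality corollary and Theorem 1) only ever use agreement at the realizable optimum, which is exactly what you prove. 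Your version buys an honest statement of the hypothesis under which the first equality holds; the paper's buys an assumption-free but strictly weaker upper-bound relation presented as an equivalence.
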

$\gD_i^+$ represents the distribution of data that belong to the same latent class of $x_i$. For each data $x_i$, Hebbian information $\small I_h(x_i;f,\gD)$ is defined as mean information of mutual duplication probability with positive samples from $\gD_i^+$. In Hebbian learning~\citep{hebb2005organization,lowel1992selection}, the learning process strengthens the connections between similar data, which means that the Hebbian information between the two data should be minimized.

On the other hand, for each data $x_i$, distinctiveness information $\gI_d(x_i;f,\gD)$ is estimated information of the proportion of class $c_i$ from the distribution $\gD$. The expected value of distinctiveness information $\gI_d(x_i;f,\gD)$ becomes a measurement indicating how diversely the latent class information is distributed within the data distribution. For agents, it is essential to acquire as much information as possible from the observations in the given environment to form diverse representations. This property, known as the distinctiveness effect~\citep{parker1998restorff, waddill1998distinctiveness}, becomes crucial in extracting the richest representation from the data.

The optimization for $\gL_{\text{HML}}$ can be interpreted as finding $f^*$ that minimizes Hebbian information among similar data, while preventing collapsed representation by maximizing $\gI_d$ as a regularization term. Figure \ref{figConceptHML} visualizes the concept of Hebbian Metric Learning. 

\begin{figure}[t]
    \begin{center}
        \includegraphics[width=0.8\columnwidth]{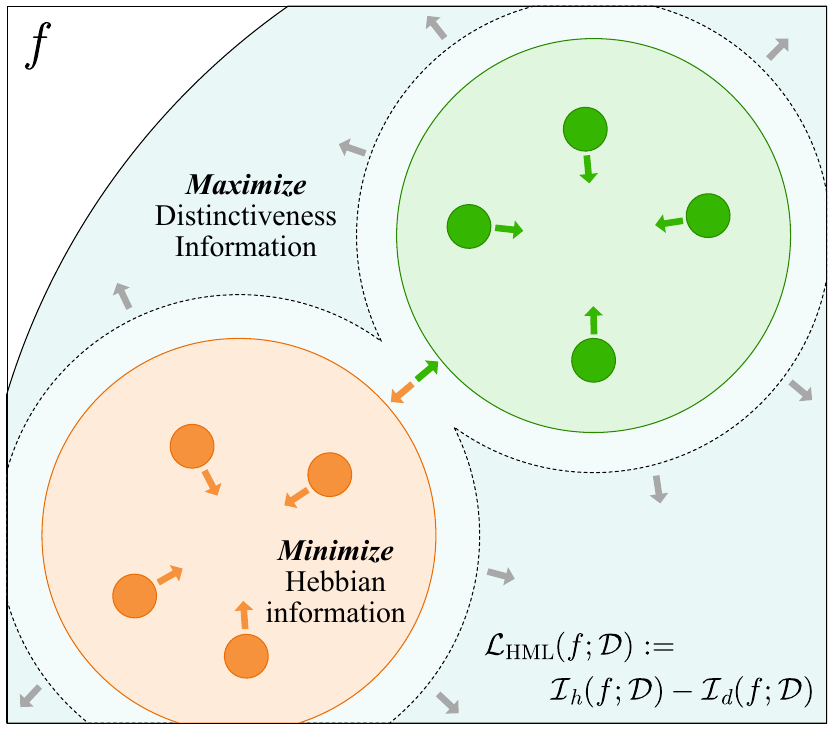}
        \caption{Conceptual Visualization of Hebbian Metric Learning. HML minimizes the Hebbian information while maximizing the distinctiveness information.}
        \label{figConceptHML}
    \end{center}
\end{figure}

\section{Memory-integrated HML\\for Class-imbalanced Environment}
\label{sec:active_data_filtering}
Conventional metric learning frameworks~\citep{khosla2020supervised,sohn2016improved,caron2020unsupervised,chen2020simple,chen2020improved,he2020momentum} often assume that $\gD$ is an \textit{oracle} with evenly distributed class information: $\forall c\in\gC, \rho_{c}=1/|\gC|$.
However, when the accessible data is unrefined, data with some dominant classes may occur more frequently than others, resulting in class-imbalances which can hinder the formation of robust representations in SSL. To deal with these imbalances, maintaining a memory which stores the data selectively can be a breakthrough. Thus we extend Hebbian Metric Learning with a memory for the empirical distribution $\gD'$ with class distribution $\rho$ by introducing a memory $\gM$.

\begin{prop}[HML Bound] Let $\gD$ and $\gD'$ be the oracle and the empirical data distribution, respectively. Then the upper bound of ideal HML loss is formulated as below:
\label{hml_bound}
\begin{align}
&\small \gL_{\text{HML}}(f;\gD) \le \nonumber\\&\underbrace{\lambda {\cdot} \gI_{h}(f;\mathcal{D'})-\gI_d(f;\gD',\gM)+|\gI_d(f;\gD',\gM)-\gI_d(f;\gD)|}_{\gL_{\text{M-HML}}(f,\gM;\gD')}
\label{emp_hml_ineq}
\end{align}
where $\gI_d(f;\gD',\gM):=\mathbb{E}_{x_i\sim \gD'} \left[\gI_d(x_i;f,\gM)\right]$ denotes the empirical distinctiveness information in the memory $\gM$ and $\lambda=1/(|\gC|\cdot\rho_{\min})$.
\end{prop}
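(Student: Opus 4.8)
The plan is to begin from the exact identity $\gL_{\text{HML}}(f;\gD)=\gI_h(f;\gD)-\gI_d(f;\gD)$ and reach the stated bound through three independent moves: swapping the oracle distinctiveness for the memory distinctiveness at the price of an error term, controlling that error by its absolute value, and rescaling the oracle Hebbian term into the empirical one with factor $\lambda$. Only the last move requires genuine work; the first two are purely algebraic.

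For the distinctiveness part, I would add and subtract the memory term $\gI_d(f;\gD',\gM)$, writing
\begin{equation*}
\gL_{\text{HML}}(f;\gD)=\gI_h(f;\gD)-\gI_d(f;\gD',\gM)+\bigl(\gI_d(f;\gD',\gM)-\gI_d(f;\gD)\bigr),
\end{equation*}
and then bound the parenthesised gap by its modulus, $\gI_d(f;\gD',\gM)-\gI_d(f;\gD)\le|\gI_d(f;\gD',\gM)-\gI_d(f;\gD)|$. This already produces the second and third summands of $\gL_{\text{M-HML}}$ verbatim, leaving only $\gI_h(f;\gD)$ to be compared with $\lambda\cdot\gI_h(f;\gD')$.

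The core step decomposes the Hebbian term by latent class. Since $\gD_i^+$ is the within-class conditional $\gD_{c_i}$, which is shared by $\gD$ and $\gD'$ (the two distributions differ only in their class-mixing weights $\rho$, not in the per-class $\gD_c$), the inner quantity $I_h(x_i;f,\gD)$ depends on $x_i$ and $f$ alone. Setting $h_c:=\mathbb{E}_{x_i\sim\gD_c}[I_h(x_i;f,\gD)]$ gives $\gI_h(f;\gD)=\tfrac{1}{|\gC|}\sum_{c}h_c$ for the uniform oracle and $\gI_h(f;\gD')=\sum_{c}\rho_c h_c$ for the empirical distribution. Because the mutual duplication probability lies in $[0,1]$, $-\log\qf{i}{j}\ge 0$, and hence each $h_c\ge 0$; using $\rho_c\ge\rho_{\min}$ termwise then yields
\begin{equation*}
\gI_h(f;\gD)=\tfrac{1}{|\gC|}\sum_{c}h_c\le\frac{1}{|\gC|\cdot\rho_{\min}}\sum_{c}\rho_c h_c=\lambda\cdot\gI_h(f;\gD').
\end{equation*}
Substituting this into the decomposition above assembles exactly $\gL_{\text{M-HML}}(f,\gM;\gD')$.

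I expect the main obstacle to be this final reweighting, which hinges on two facts that must be argued rather than assumed: that $\gD$ and $\gD'$ share each conditional $\gD_c$ so that the profile $h_c$ is common to both (making the Hebbian terms differ only through the class weights), and that $h_c\ge 0$, which is precisely what licenses the termwise comparison $\tfrac{1}{|\gC|}\le\lambda\rho_c$. As a consistency check, $\sum_{c}\rho_c=1$ over $|\gC|$ classes forces $\rho_{\min}\le 1/|\gC|$, so $\lambda\ge 1$ and the inequality indeed loosens the Hebbian term, as any upper bound should.
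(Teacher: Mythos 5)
Your proposal is correct and follows essentially the same route as the paper's proof: the same class-wise decomposition $\gI_h(f;\gD)=\sum_c \frac{1}{|\gC|}\gI_h(f;\gD_c)$ versus $\sum_c \rho_c\,\gI_h(f;\gD_c)$ under the shared-conditional assumption, the same termwise reweighting via $\rho_c\ge\rho_{\min}$, and the same add-and-subtract of $\gI_d(f;\gD',\gM)$ followed by bounding the gap with its absolute value. Your explicit observation that $h_c\ge 0$ (because the mutual duplication probability lies in $[0,1]$) is a point the paper leaves implicit but does in fact need for the termwise comparison.
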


The proof is provided in Appendix A. We denote the upper bound in Equation \ref{emp_hml_ineq} as $\gL_{\text{M-HML}}(f,\gM;\gD')$, and adopt it as an objective function to minimize. In Theorem \ref{optimality}, we show that the optimal feature extractor $f^*$ that minimizes $\gL_{\text{M-HML}}$ with optimal memory $\gM^*$ is also optimal for $\gL_{\text{HML}}$ with oracle data distribution.
\begin{thm}[Optimality of M-HML]
Assume that an optimal memory $\gM^*\simeq \gD$ exists. With the memory $\gM^*$, the ideal loss  $\gL_{\text{HML}}$ and empirical loss $\gL_{\text{M-HML}}$ shares the optimal feature extractor $f^*$:
\begin{equation*}
\gL_{\text{M-HML}}(f^*,\gM^*;\gD')=\gL_{\text{HML}}(f^*;\gD)=-\log |\gC|
\end{equation*}
where mutual duplication probability with $f^*$ satisfies the following property:
\begin{equation*}
\qfo{i}{j}=\begin{cases}1\qquad c_i=c_j\\0\qquad c_i\neq c_j.\end{cases}
\end{equation*}
\label{optimality}
\end{thm}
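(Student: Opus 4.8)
The plan is to use Proposition~\ref{prop_hml} to pin down the form of $f^*$, then evaluate both losses at $f^*$ term by term, and finally invoke the bound of Proposition~\ref{hml_bound} to upgrade the value computation into an optimality statement. First I would argue that the feature extractor satisfying the stated bang-bang property is precisely the minimizer of $\gL_{\text{HML}}(f;\gD)$. By Proposition~\ref{prop_hml}, $\argmin{f}\gL_{\text{HML}}(f;\gD)=\argmin{f}D_{\text{KL}}(P\|Q)$, and the KL divergence attains its floor of $0$ exactly when $Q=P$, i.e.\ when $\qf{i}{j}$ reproduces the true posterior $\pf{i}{j}$. Assuming latent classes are identifiable from observations so that $p(c_i=c_j|x_i,x_j)\in\{0,1\}$, the matching condition forces $\qfo{i}{j}=1$ when $c_i=c_j$ and $0$ otherwise, which is exactly the claimed property.

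Next I would evaluate $\gL_{\text{HML}}(f^*;\gD)$ one term at a time. For the Hebbian term, positive samples drawn from $\gD_i^+$ share the latent class of $x_i$, so $\qfo{i}{j}=1$ and $I_h(x_i;f^*,\gD)=\mathbb{E}_{x_j\sim\gD_i^+}[-\log 1]=0$; hence $\gI_h(f^*;\gD)=0$. For the distinctiveness term, only same-class samples contribute a $1$, so $\mathbb{E}_{x_j\sim\gD}[\qfo{i}{j}]=\rho(c_i)$; invoking the oracle assumption $\rho_c=1/|\gC|$ gives $\gI_d(x_i;f^*,\gD)=-\log(1/|\gC|)=\log|\gC|$ for every $x_i$, whence $\gI_d(f^*;\gD)=\log|\gC|$. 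Subtracting yields $\gL_{\text{HML}}(f^*;\gD)=0-\log|\gC|=-\log|\gC|$.

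Then I would evaluate $\gL_{\text{M-HML}}(f^*,\gM^*;\gD')$. The weighted Hebbian term vanishes for the same reason as above, so $\lambda\cdot\gI_h(f^*;\gD')=0$ regardless of $\lambda$. The crucial step is the memory distinctiveness term: because $f^*$ perfectly separates classes, $\gI_d(x_i;f^*,\gM^*)=-\log\mathbb{E}_{x_j\sim\gM^*}[\qfo{i}{j}]$ equals $-\log$ of the fraction of class-$c_i$ elements in $\gM^*$, a quantity independent of how $x_i$ is distributed. Invoking $\gM^*\simeq\gD$ so that the memory's class composition matches the uniform oracle, this fraction is $1/|\gC|$ for every $x_i$, giving the constant integrand $\log|\gC|$ and therefore $\gI_d(f^*;\gD',\gM^*)=\log|\gC|$ even though the outer expectation is taken over the imbalanced $\gD'$. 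Consequently the correction term collapses, $|\gI_d(f^*;\gD',\gM^*)-\gI_d(f^*;\gD)|=|\log|\gC|-\log|\gC||=0$, and the whole expression reduces to $0-\log|\gC|+0=-\log|\gC|$.

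Finally I would close the optimality claim through Proposition~\ref{hml_bound}: for every $f$ we have $\gL_{\text{M-HML}}(f,\gM^*;\gD')\ge\gL_{\text{HML}}(f;\gD)\ge\min_{f}\gL_{\text{HML}}(f;\gD)=-\log|\gC|$, and the two evaluations above show this lower bound is attained at $f^*$; hence $f^*$ simultaneously minimizes both objectives with common optimum $-\log|\gC|$. The main obstacle I anticipate is rigorously justifying the first step --- that the KL-minimizer has the exact $\{0,1\}$ form --- which hinges on the identifiability (disjoint-support) of the latent classes; together with precisely formalizing $\gM^*\simeq\gD$ as ``the memory reproduces the oracle class proportions,'' this is what makes the distinctiveness terms coincide, washes out the imbalance of $\gD'$, and renders the bound tight.
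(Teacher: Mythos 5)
Your proof is correct and its computational core coincides with the paper's: both arguments evaluate the two losses at the bang-bang $f^*$, kill the Hebbian terms, and use $\gM^*\simeq\gD$ to force $\mathbb{E}_{x_j\sim\gM^*}\left[\qfo{i}{j}\right]=1/|\gC|$ for every query point, so that $\gI_d(f^*;\gD',\gM^*)=\gI_d(f^*;\gD)=\log|\gC|$ regardless of the imbalance of $\gD'$. You differ in two places. First, you derive the $\{0,1\}$ form of $\qfo{i}{j}$ from $D_{\text{KL}}(P\|Q)=0$ at $Q=P$, which needs the extra identifiability assumption $\pf{i}{j}\in\{0,1\}$ that you flag yourself; the paper instead takes this $f^*$ as given from its Corollary~1 (Appendix A), where it is obtained by driving $\gI_h$ to zero and then maximizing $\gI_d$ term by term to reach the floor $-\log|\gC|$, sidestepping any structural assumption on $p$. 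Second, you close the optimality claim with the explicit sandwich $\gL_{\text{M-HML}}(f,\gM^*;\gD')\ge\gL_{\text{HML}}(f;\gD)\ge-\log|\gC|$ from Proposition~\ref{hml_bound}; the paper instead bounds $|\gL_{\text{M-HML}}-\gL_{\text{HML}}|$ by $|\lambda\cdot\gI_h(f;\gD')-\gI_h(f;\gD)|+2\,|\gI_d(f;\gD',\gM)-\gI_d(f;\gD)|$ and shows this vanishes at $(f^*,\gM^*)$, leaving the ``shared minimizer'' conclusion somewhat implicit. Your sandwich is arguably the cleaner way to establish that $f^*$ actually minimizes $\gL_{\text{M-HML}}$ over all $f$, rather than merely that the two losses agree in value there; the cost is the identifiability caveat in your first step, which the paper's direct construction avoids.
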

\paragraph{Proof Sketch.}{We show that the bound of difference between two losses becomes zero with optimal feature extractor $f^*$ and memory $\gM^*$. The bound contains two terms: $|\lambda\cdot\gI_{h}(f;\gD')-\gI_{h}(f;\gD)|$ and $|\gI_d(f;\gD',\gM)-\gI_d(f;\gD)|$.}

We find $f$ which satisfies $|\lambda\cdot\gI_{h}(f;\gD')-\gI_{h}(f;\gD)|=0$ by setting $\qf{i}{j}=1$ for $c_i=c_j$. We show that $|\gI_d(f^*;\gD',\gM^*)-\gI_d(f^*;\gD)|=0$ with optimal feature extrator $f^*$. Then $|\gL_{\text{M-HML}}(f^*,\gM;\gD')-\gL_{\text{HML}}(f^*;\gD)|=0$ is satisfied and $\gL_{\text{HML}}(f^*;\gD)=-\log|C|$.

By utilizing Theorem \ref{optimality} and Proposition \ref{prop_hml}, the optimal feature extractor $f^*$ which minimizes $D_{\text{KL}}(p||q)$ is also optimal for $\gL_{\text{M-HML}}(f^*,\gM^*;\gD')$ with the memory $\gM^*$. In the next section, we describe a procedural methodology to effectively optimize $\gL_{\text{M-HML}}$ by adopting an active memory based on distinctiveness information $\gI_{d}$.

\begin{figure*}[tb]
    \begin{center}
        \includegraphics[width=0.9\textwidth]{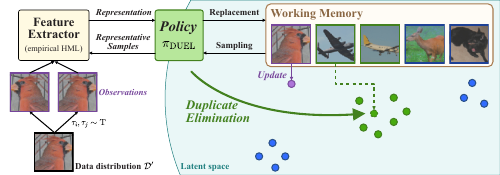}
        \caption{Visualization of general DUEL framework. Our method stores various data for the negative samples by Duplicate Elimination. The DUEL policy selects the most duplicated sample in memory (green) and replaces it with current data (purple).}
        \label{figDUEL}
    \end{center}
\end{figure*}

\section{Duplicate Elimination on Active Memory\\with Hebbian Metric Learning}
 
\subsection{Memory Management Policy}

In the previous section, we propose the objective function $\gL_{\text{M-HML}}$ for both the memory and feature extractor. Since memory is a finite set which stores limited number of the incoming data, optimizing $\gM$ is a discrete process of deciding which data to store. Therefore, we split the objective function $\gL_{\text{M-HML}}$ into objective function of memory while fixing the feature extractor, and objective function of feature extractor while fixing the memory. In this case, we set the memory $\gM$ to hold $K$ representative data points: $\gM\in\gX^K$.
\begin{align}
f^*:=&\,\argmin{f}\left(\lambda \cdot \gI_{h}(f;\mathcal{D'})-\gI_d(f;\gD',\gM)\right)
\label{emp_hml}\\
\small \gM^*:=&\,\argmin{\gM\in\gX^K}|\gI_d(f;\gD',\gM)-\gI_d(f;\gD)| \nonumber\\
=&\,\argmax{\gM\in\gX^K}\gI_d(f;\gD',\gM)
\label{mem_optimize}
\end{align}
To remove $\gI_d(f;\gD)$ term with oracle distribution $\gD$ in Equation \ref{mem_optimize}, we assume that $\gI_d(f;\gD)\ge\gI_d(f;\gD',\gM)$ and simplify Equation \ref{mem_optimize} as $\arg\max_{\gM\in\gX^K}\gI_d(f;\gD',\gM)$, which implies maximizing the distinctiveness information in the memory.
To optimize the memory, we introduce an active memory $\gMp$ that is procedurally updated by a memory management policy $\pi$. Since finding an optimal policy $\pi^*$ is NP-hard, we design its approximated policy inspired by the human cognitive process.

\subsection{Duplicate Elimination Policy on Active Memory}
\label{duel_policy}

Working memory is an active memory connected to human sensory-motor neurons, allowing humans to selectively concentrate on necessary information from the environment to achieve their goals. In order to mimic the behavior of CES, which is inhibiting the dominant information~\citep{miyake2000unity,wongupparaj2015relation}, we design a memory management policy $\pi_{\text{DUEL}}$ based on distinctiveness information.

\begin{defn}[Duplicate Elimination] Let the new data $x_{\text{new}}$ be provided to active memory $\gMp$. 
The DUEL policy $\pi_{\text{DUEL}}$ is a policy which chooses the $J$-th element $x_J\in\gMp$ with the minimum value of $\gI_d(x_j;f,\gMp)$.
\label{def-DUEL}
\begin{equation}
J=\argmin{j\in\{1..K\}}\gI_d(x_j;f,\gMp)
\label{criterion}
\end{equation}
\end{defn}

$\pi_{\text{DUEL}}$ in Definition \ref{def-DUEL} replaces the element with the least distinctiveness information, which is the most duplicated element in the memory. The replacement is carried out gradually, one element at a time. We show that the process of $\pi_{\text{DUEL}}$ is \textit{safe} in the sense that it increases total amount of information as in Appendix A. Figure \ref{figDUEL} illustrates the behavior of $\pi_{\text{DUEL}}$. $\pi_{\text{DUEL}}$ finds the densest area (green) of the latent space and ejects the most duplicated element (dotted outline). The plural region (blue) is not influenced by this replacement and leaving this region intact will increase distinctiveness information in the memory.

\begin{algorithm}[t]
    \caption{DUEL Framework with the policy $\pi_{\text{DUEL}}$}
    \label{alg-DUEL-framework}
    \textbf{Model : } feature extractor $f_\theta$, memory $\mathcal{M}$ \\
    \textbf{Input : } empirical data distribution $\gD'$, batch size $B$, memory size $K$, learning rate $\eta$ \\
    \textbf{Output :} trained feature extractor $f_{\theta^*}$
    
    \begin{algorithmic}[1]
    \STATE $\theta\leftarrow \theta_{0}$
    \STATE $\mathcal{M}\leftarrow \mathcal{M}_0$
    \WHILE {$\theta$ is not converged}
    \STATE $\{(x_{b},x_{b}^+)\}_{b=1}^B \leftarrow \text{Sample}(\gD')$
    \STATE Compute $\gL_{\text{InfoNCE}}(\{(x_{b},x_{b}^+)\}_{b=1}^B,\mathcal{M};f_\theta)$
    \STATE $\theta\leftarrow \theta - \eta\nabla_{\theta}\gL_{\text{InfoNCE}}$
    \FOR {$b \in \{1,\cdots,B\}$} 
    \STATE $\mathcal{M} \leftarrow \mathcal{M}\cup\{x_{b}\}$
    \STATE $J \leftarrow \argmin{j\in\{1..(K+1)\}}\gI_d(x_j;f_{\theta},\gM)$ \hfill $\triangleright$\:$\pi_{\text{DUEL}}$
    \STATE $\mathcal{M}\leftarrow \mathcal{M}\setminus\{x_{J}\}$
    \ENDFOR
    \ENDWHILE
    \end{algorithmic}
\end{algorithm}

Since reducing memory usage and time comsumption of $\pi_{\text{DUEL}}$ is crucial to implement our model, we optimize the policy $\pi_{\text{DUEL}}$ to minimize the time consumption with an affordable amount of additional resources. Details of the implementation and analyses on the resource usage are in Appendix D. With $\pi_{\text{DUEL}}$, we now propose the Duplicate Elimination (DUEL) framework.

\subsection{DUEL Framework}
\label{DUEL_framework}

The DUEL framework is an self-supervised learning framework which optimizes $\gL_{\text{M-HML}}$ in Equation \ref{emp_hml_ineq}. The procedure of our framework is shown in Figure \ref{figDUEL}. To sample positive samples from $\gD_i^+$, we use augmentation methods~\citep{chen2020simple} rather than maintaining multiple bins of each class~\citep{sohn2016improved}. 
Our framework can be effectively applied to class-imbalanced environment without any class information, due to the inductive bias introduced by augmentation.

Algorithm \ref{alg-DUEL-framework} summarizes the training procedure of DUEL framework. We utilize the InfoNCE loss $\gL_{\text{InfoNCE}}$ instead of $\gL_{\text{M-HML}}$ to train the feature extractor because $\gL_{\text{InfoNCE}}$ is equivalent to $\gL_{\text{M-HML}}$ under certain conditions: (1) $\qf{i}{j}=\exp((f(x_i)^\top f(x_j)-1)/\tau)$ and (2) $\lambda=1$. More details and mathematical support regarding the relationship between our DUEL framework and conventional SSL models are provided in the Appendix B.

After each training step of the feature extractor, the duplication elimination step begins. In the duplication elimination step, selected data according to the policy $\pi_{\text{DUEL}}$ is replaced with the current data. These two steps repeat iteratively until the termination condition is satisfied. We also provide the general form of memory-integrated Hebbian Metric Learning algorithm in Appendix C.

\begin{table*}[t]
    \renewcommand*{\arraystretch}{1.2}
    \begin{center}
        \begin{adjustbox}{width=\textwidth}
            \begin{tabular}{c|cccc|cccc}
            \Xhline{3\arrayrulewidth}
            \multirow{3}{*}{Method} &\multicolumn{4}{c|}{\bf STL-10}&\multicolumn{4}{c}{\bf CIFAR-10} \\
            &\multicolumn{4}{c|}{Class Probability $\rho_{\max}{\scriptstyle (\rho_{\min})}$}&\multicolumn{4}{c}{Class Probability $\rho_{\max}{\scriptstyle (\rho_{\min})}$} \\
            &0.1 {\small(0.1)}&0.25 {\small(0.083)}&0.5 {\small(0.056)}&0.75 {\small(0.028)}&0.1 {\small(0.1)}&0.25 {\small(0.083)}&0.5 {\small(0.056)}&0.75 {\small(0.028)}\\
            \hline
            MoCoV2&${\bf79.59}{\scriptstyle\pm6.57}$&${\bf79.32}{\scriptstyle\pm1.02}$&$77.28{\scriptstyle\pm2.68}$&$75.34{\scriptstyle\pm1.11}$&$80.99{\scriptstyle\pm1.85}$&${\bf82.50}{\scriptstyle\pm1.12}$&$76.54{\scriptstyle\pm1.97}$&$70.12{\scriptstyle\pm1.40}$\\
            SimCLR&$70.80{\scriptstyle\pm1.21}$&$69.59{\scriptstyle\pm0.50}$&$67.20{\scriptstyle\pm1.10}$&$68.75{\scriptstyle\pm2.08}$&$82.28{\scriptstyle\pm1.26}$&$78.90{\scriptstyle\pm1.60}$&$76.67{\scriptstyle\pm1.96}$&$72.81{\scriptstyle\pm1.39}$\\
            Barlow Twins&$77.48{\scriptstyle\pm3.10}$&$78.21{\scriptstyle\pm2.82}$&$72.47{\scriptstyle\pm0.56}$&$71.98{\scriptstyle\pm2.12}$&$51.72{\scriptstyle\pm0.30}$&$51.59{\scriptstyle\pm1.44}$&$54.89{\scriptstyle\pm0.93}$&$53.54{\scriptstyle\pm0.99}$\\
            BYOL&$68.42{\scriptstyle\pm0.80}$&$64.04{\scriptstyle\pm1.41}$&$58.78{\scriptstyle\pm3.49}$&$61.82{\scriptstyle\pm2.77}$&$67.87{\scriptstyle\pm3.26}$&$69.32{\scriptstyle\pm3.52}$&$67.50{\scriptstyle\pm1.63}$&$60.40{\scriptstyle\pm2.40}$\\
            \hline
            D-MoCo (ours)&${\bf79.20}{\scriptstyle\pm3.56}$&$76.03{\scriptstyle\pm2.06}$&${\bf78.65}{\scriptstyle\pm2.23}$&${\bf77.23}{\scriptstyle\pm0.48}$&$82.58{\scriptstyle\pm3.48}$&$81.40{\scriptstyle\pm2.24}$&${\bf80.53}{\scriptstyle\pm3.04}$&${\bf77.99}{\scriptstyle\pm1.71}$\\
            D-SimCLR (ours)&$75.89{\scriptstyle\pm2.90}$&$72.68{\scriptstyle\pm3.53}$&$78.23{\scriptstyle\pm4.38}$&$74.13{\scriptstyle\pm1.04}$&${\bf82.82}{\scriptstyle\pm1.10}$&${\bf82.37}{\scriptstyle\pm1.53}$&$79.56{\scriptstyle\pm2.61}$&$75.17{\scriptstyle\pm3.49}$\\
            \Xhline{3\arrayrulewidth}
            \end{tabular}
        \end{adjustbox}
        \caption{Linear probing accuracies with various settings. (3 times, \%)}
        \label{table1}
    \end{center}
\end{table*}

\section{Experiments}

The goal of our framework is to learn a robust representation given unrefined and instantaneous data sampled from a class-imbalanced distribution. Thus, we validate our framework in class-imbalanced environments.

\paragraph{Experiment setting}{In our experiments, we use the ResNet-50~\citep{he2016deep} as a backbone of the feature extractor. We choose MoCoV2~\citep{chen2020improved}, SimCLR~\citep{chen2020simple}, and Barlow Twins~\citep{zbontar2021barlow} as baselines. We implement our DUEL frameworks based on MoCoV2 and SimCLR by adding the DUEL process, denoted as D-MoCo and D-SimCLR respectively. Hyperparameters for all models are unified for fair comparison.
After the training, we evaluate each model with downstream tasks such as linear probing with class-balanced datasets to prove each model can extract generalized representations. More details for hyperparameters and the experiments are provided in the Appendix E.
}

\paragraph{Class-imbalanced environment}{We design a two-step data generator with predefined datasets to describe a class-imbalanced environment.
A dataset $D$ is partitioned into $D_c$ with each class $c\in\gC$. In every experiment, we assume that one class, denoted as $c_{\max}$, occurs much more frequently than others. The occurrence probability of the most frequent class is denoted as $\rho_{\max}$. The probabilities of the remaining classes are identically set as $\rho_{\min}=\frac{1}{|C|-1}(1-\rho_{\max})$.
Then the data is sampled from the environment in two steps:
\begin{enumerate}
\item \textit{Class Sampling}: $c\sim\rho,\: \rho_c=\begin{cases}\rho_{\max}\quad\quad c=c_{\max}\\\rho_{\min}\quad\quad\, c\neq c_{\max}\end{cases}$
\item \textit{Data Sampling}: $x\sim \text{Uniform}(D_{c})$.
\end{enumerate}
We utilize CIFAR-10~\citep{krizhevsky2009learning} and STL-10~\citep{coates2011analysis} for experiments. We also use ImageNet-LT~\citep{liu2019large}, which has a long-tailed class distribution, to validate our framework in a more realistic environment. See Appendix E for more details.}

\begin{figure}[tb]
     \centering
     \begin{subfigure}[b]{0.85\columnwidth}
         \centering
         \includegraphics[width=\columnwidth]{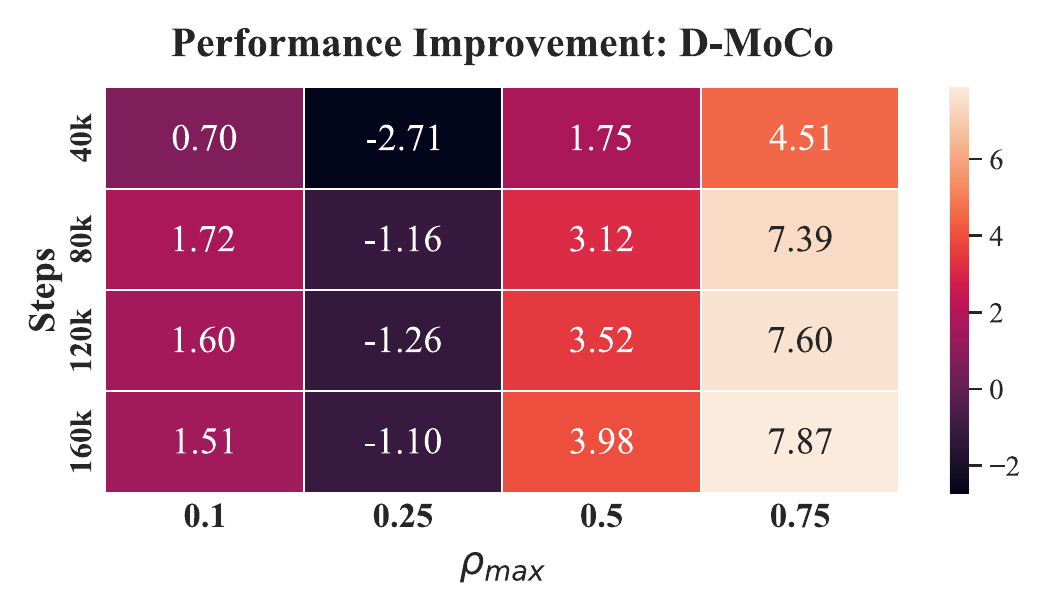}
         \caption{D-MoCo (Compared to MoCoV2)}
         \label{perform_moco}
     \end{subfigure}
     \hfill
     \begin{subfigure}[b]{0.85\columnwidth}
         \centering
         \includegraphics[width=\columnwidth]{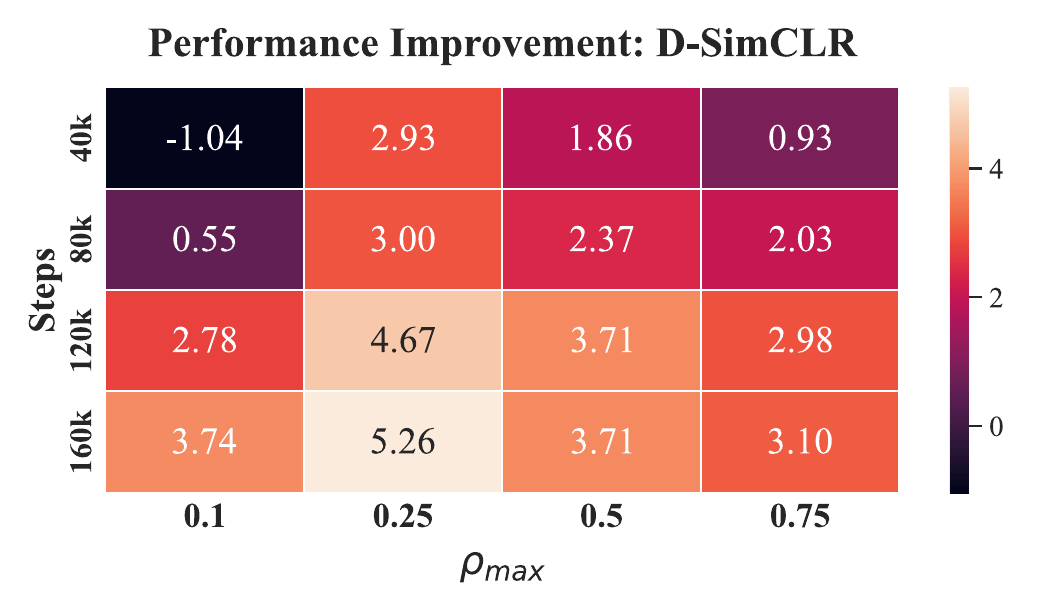}
         \caption{D-SimCLR (Compared to SimCLR)}
         \label{perform_simclr}
     \end{subfigure}
        \caption{Visualization of the performance enhancement in the linear probing task. In both D-MoCo and D-SimCLR, accuracies are gradually improved during the training steps. Especially in D-MoCo, the DUEL process can prevent the dramatical performance degradation with high $\rho_{\max}$.}
        \label{perform_improvement}
\end{figure}

\paragraph{Class-imbalanced learning with SSL frameworks}{To validate our approaches, we first conduct experiments with conventional SSL models in class-imbalanced environments. Table \ref{table1} shows that SSL models suffer from the performance degradation, especially when the class distribution is highly imbalanced. However, our frameworks can prevent the performance loss compared to their origin models. The visualization of the performance improvement during the training process in Figure \ref{perform_improvement} implies that the DUEL process gradually improves the robustness of representations. More experiments and results are described in Appendix F.
}

\begin{table}[t]
    \renewcommand*{\arraystretch}{1.2}
    \centering
    \begin{adjustbox}{width=1.0\columnwidth}
    \begin{tabular}{c|cccc}
    \Xhline{3\arrayrulewidth}
    \multirow{2}{*}{Metric}& \multirow{2}{*}{Method}& \multicolumn{3}{c}{Class Probability $\rho_{\max}{\scriptstyle (\rho_{\min})}$} \\
    &&0.1 {\small(0.1)}&0.5 {\small(0.056)}&0.75 {\small(0.028)} \\ 
    \hline
    \multirow{2}{*}{\begin{tabular}[c]{@{}c@{}}Class\\ Entropy ($\uparrow$)\end{tabular}} 
    & MoCo & $2.2991$ & $1.8394$ & $1.0523$ \\
    &\cellcolor{TableGray}  D-MoCo &\cellcolor{TableGray}  $2.2988$ &\cellcolor{TableGray}  ${\bf2.1654}$ &\cellcolor{TableGray}  ${\bf1.8306}$ \\ 
    \hline
    \multirow{2}{*}{\begin{tabular}[c]{@{}c@{}}Intra-class\\ Variance ($\downarrow$)\end{tabular}} 
    & MoCo   & $0.7879$ & $0.7853$ & $0.7689$ \\
    &\cellcolor{TableGray} D-MoCo &\cellcolor{TableGray}  $0.7750$ &\cellcolor{TableGray}  $0.7633$ &\cellcolor{TableGray}  $0.7699$ \\ 
    \hline
    \multirow{2}{*}{\begin{tabular}[c]{@{}c@{}}Inter-class\\ Similarity ($\downarrow$)\end{tabular}} 
    & MoCo   & $-0.1005$ & $-0.0534$ & $0.0723$ \\
    &\cellcolor{TableGray}  D-MoCo &\cellcolor{TableGray}  ${\bf-0.1033}$ &\cellcolor{TableGray}  ${\bf-0.0846}$ &\cellcolor{TableGray}  ${\bf-0.0513}$ \\
    \Xhline{3\arrayrulewidth}
    \end{tabular}
    \end{adjustbox}
    \caption{Quantitative analysis of the behavior of MoCo and D-MoCo with various metrics. (CIFAR-10)}
    \label{table2}
\end{table}

\begin{figure*}[!bt]
    \begin{center}
    \begin{subfigure}[b]{0.27\textwidth}
    \includegraphics[width=\columnwidth]{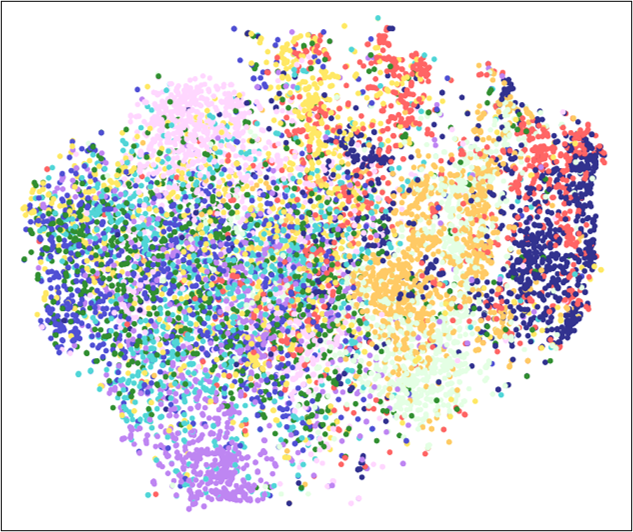}
    \caption{$t$-SNE (D-MoCo).}
    \label{tsne_all}
    \end{subfigure}%
    \hspace{0.7em}
    \begin{subfigure}[b]{0.27\textwidth}
    \includegraphics[width=\columnwidth]{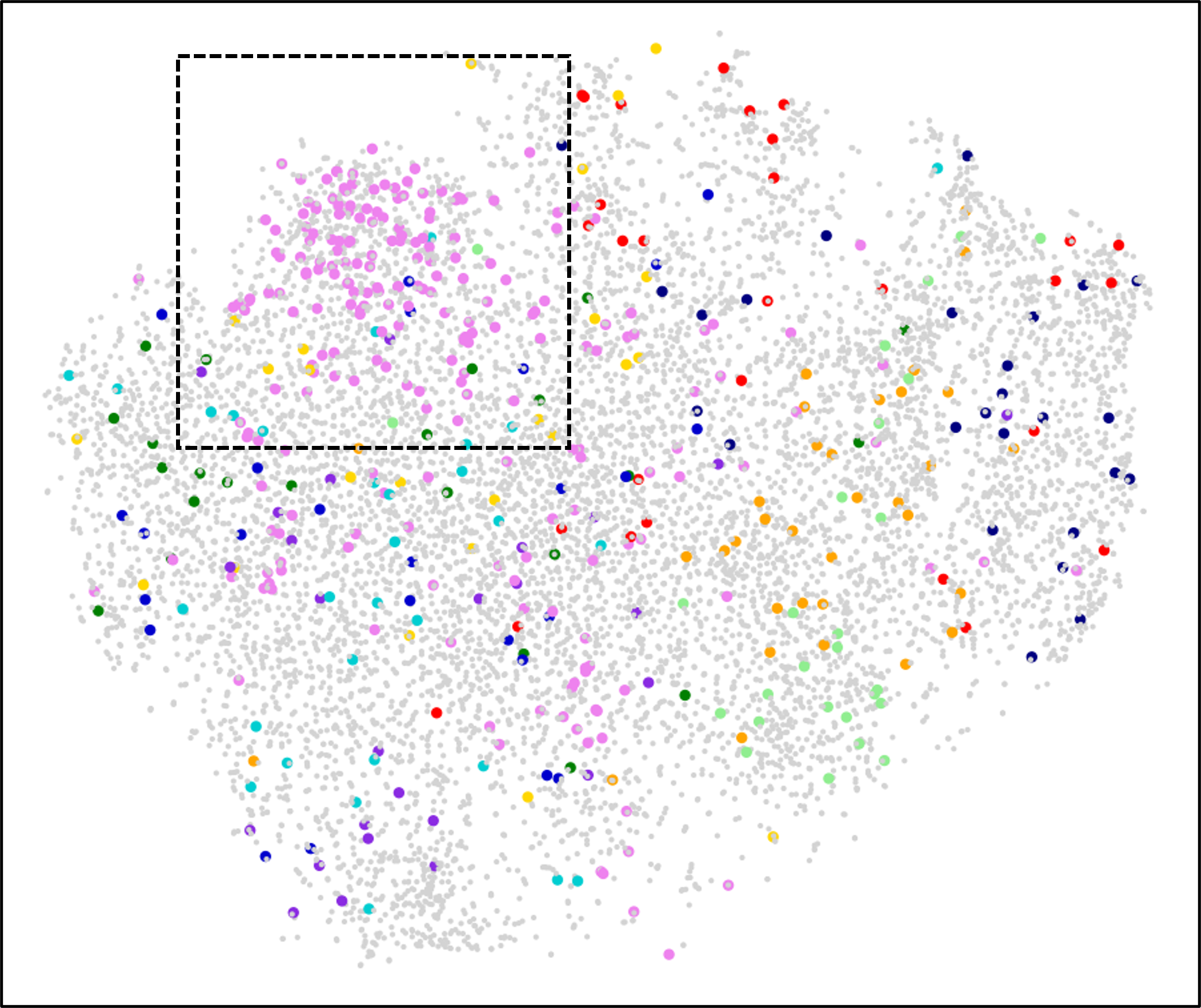}
    \caption{Batch-wise observation with $\gD'$.}
    \label{tsne_batch}
    \end{subfigure}%
    \hspace{0.7em}
    \begin{subfigure}[b]{0.27\textwidth}
    \includegraphics[width=\columnwidth]{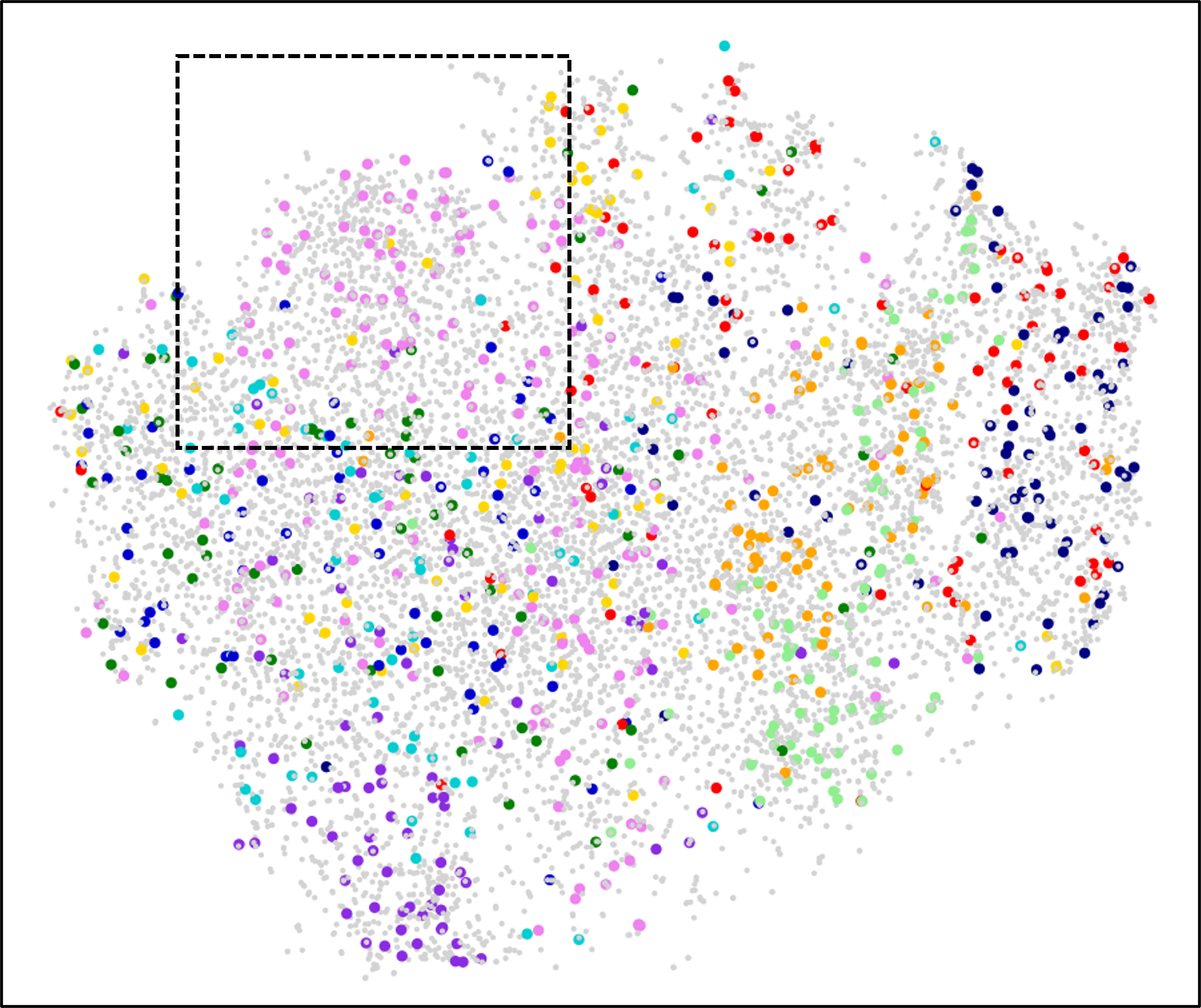}
    \caption{Updated memory with $\pi_{\text{DUEL}}$.}
    \label{tsne_memory}
    \end{subfigure}
    \end{center}
    \caption{t-SNE visualization of the active data filtering process with DUEL policy. (a) The representations extracted by the trained model along with their corresponding class. (b) The agent faces a dominant class (pink) that occurs more frequently than others. (c) The DUEL policy $\pi_{\text{DUEL}}$ replaces duplicated data with newer data and maximizes the distinctiveness information.}
    \label{tsne}
\end{figure*}

\paragraph{Analysis of the robustness of representation}{
Additionally, we measure and compare how well the representations extracted from the DUEL framework and the baseline's feature extractor cluster are formed. We use intra-class variance and inter-class similarity as measurements for this purpose. The intra-class variance and inter-class similarity are described in Equation \ref{v-intra} and \ref{d-inter}, respectively.
}
\begin{align}
\label{v-intra}
\bar{\text{v}}_{\text{intra}}:=&\frac{1}{|\gC|}\sum_{c\in\gC} \mathbb{E}_{x_c\sim \gD_c} \left[ (\bar{r}_{c}^\top f(x_c)-1)^2 \right]\\ \bar{\text{s}}_{\text{inter}}:=&\frac{1}{|\gC|\cdot(|\gC|-1)}\sum_{c\in\gC}\sum_{c'\neq c}(\bar{r}_{c}^\top \bar{r}_{c'})
\label{d-inter}
\end{align}
$\bar{r}_{c}$ is a centroid of each class on a hypersphere: $\bar{r}_{c}=\mathbb{E}_{x_c\sim \gD_c} \left[f(x_c)\right] / ||\mathbb{E}_{x_c\sim \gD_c} \left[f(x_c)\right]||_2$. Intra-class variance indicates how densely the representations of the same class are gathered, while inter-class similarity indicates how far apart the centroids of each class are. From the perspective of a classification task, both low intra-class variance and low inter-class similarity signify the robustness of representations. Table \ref{table2} presents the quantitative results for MoCo and D-MoCo. In both cases, the intra-class variance is preserved in every environment. However, the inter-class similarity of MoCo dramatically increases in extreme situation with $\rho_{\max}=0.75$. It implies that our framework extracts more distinguishable representation than MoCo when the data is class-imbalanced.

\paragraph{The role of the DUEL policy}{We analyze the properties of the data stored in the memory to show that the DUEL policy can effectively mitigate the class imbalances. In Table \ref{table2}, we observe that the entropy of the class distribution within the memory of D-MoCo is consistently higher than that of MoCo. This indicates that the DUEL process can maintain the diversity of the class information even in extremely class-imbalanced environments. We also visualize the policy of the DUEL framework with t-SNE~\citep{van2008visualizing} in Figure \ref{tsne}. Even in the presence of the frequent class (pink) (Figure \ref{tsne_batch}), the proposed framework filters out the duplicates and stores diverse data in the memory (Figure \ref{tsne_memory}).}

\section{Related Work}
\label{relatedwork}
\paragraph{Class-imbalanced learning}{
Class-imbalanced learning is a methodology for effective learning when class information is unevenly distributed in the data. To address this challenge, various techniques such as data resampling to smooth out class distributions~\citep{buda2018systematic,pouyanfar2018dynamic} and specialized loss functions~\citep{cao2019learning,cui2019class,tan2020equalization} have been employed. However, these approaches have limitations, as they require class information for each data point and may struggle to perform stably in extremely class-imbalanced environments. Recent research in class-imbalanced learning~\citep{yang2020rethinking,liu2021self} has shown that the self-supervised pretraining technique is more robust in class-imbalanced environments, even without explicit class information. To improve adaptation to extreme class-imbalanced environments, we have proposed an SSL framework with an additional active memory.
}

\paragraph{Self-supervised learning}{
Self-supervised learning has been proposed in different paradigms depending on the loss function and model architecture. For example, InfoNCE-based SSL models~\citep{chen2020simple,chen2020improved,oord2018representation}, for instance, can be considered as an extension of traditional metric learning that does not use class information. In the case of BYOL~\citep{grill2020bootstrap}, training process is based on knowledge distillation on the student model with a teacher model that is updated with momentum. Recently, several methods have been introduced, including Barlow Twins~\citep{zbontar2021barlow}, which perform metric learning by matching distributions on the latent space~\citep{bardes2021vicreg,liu2022self,chen2021exploring}. To validate the DUEL framework, we compared representative models from each paradigm, primarily based on InfoNCE.
}

\paragraph{Dealing with negative samples}{
In contrastive SSL, numerous studies have highlighted the significant impact of properly configuring negative samples on the model performance. Since self-supervised learning fundamentally extracts negative samples in an i.i.d. manner, the influence of the number of negative samples on training has been investigated~\citep{arora2019theoretical,ash2021investigating,awasthi2022more}. Subsequently, techniques such as generating virtual data using interpolation between samples~\citep{kalantidis2020hard} and applying penalties to elements within negative samples that share the same class information for debiasing have been employed~\citep{chuang2020debiased}. In addition, methods using mutual dependencies among elements within the same batch to adjust the degree of learning for each triplet have also been proposed~\citep{tian2022understanding}. Our filtering algorithm has improved performance by encouraging the maximization of distinctiveness information among negative samples, especially for data distributions containing class imbalances.
}

\section{Conclusion}

With respect to self-supervised class-imbalanced learning, we mainly claim that an active memory is essential to robustly generalize to instantaneous and class-imbalanced data without class information. We first introduce the Hebbian Metric Learning which optimizes both distinctiveness and Hebbian information. As an implementation of memory-integrated HML, we propose the Duplicate Elimination framework inspired by the working memory. We validate the DUEL framework with class-imbalanced environments and analyze the behavior of the framework. Our novel framework gradually maximizes the distinctiveness information in the memory, which leads to the preservation of the robustness despite dramatic class imbalance.

\paragraph{Limitations}{As we discuss, finding the optimal memory management policy $\pi^*$ is difficult to achieve in practice. Although the DUEL policy provides sufficient robustness, one can argue that our policy does not perform \textit{optimally} in some situations. We claim that further investigations on HML and distinctiveness information will be pivotal in comprehending the behavior of SSL and determining the best policy.
}

\section{Acknowledgments}
This work was partly supported by the IITP (2021-0-02068-AIHub/15\%, 2021-0-01343-GSAI/20\%, 2022-0-00951-LBA/25\%, 2022-0-00953-PICA/25\%) and NRF (RS-2023-00274280/15\%) grant funded by the Korean government.

\bibliography{aaai24}

\clearpage
\onecolumn
\appendix
\begin{center}
\Large \bf
Supplementary Materials of \\
DUEL: Duplicate Elimination on Active Memory for Self-supervised Class-imbalanced Learning
\end{center}

\section{A. Mathematical Support}
\setcounter{thm}{0}
\setcounter{lemma}{0}
\setcounter{prop}{0}

\begin{defn}[Message passing] We can generalize the class information of unknown samples with known samples and feature extractor $f$. By using mutual duplication probability and message passing with them.
\label{message_passing}
\end{defn}
\begin{equation*}
q(c_i=c|x_i;f):=\frac{1}{Z} \mathbb{E}_{x_j\sim \gD} \left[ \qf{i}{j}p(c_{j}=c|x_{j}) \right]
\end{equation*}
\begin{equation*}
Z=\sum_{c\in \gC} \mathbb{E}_{x_k\sim \gD} \left[ \qf{i}{k}p(c_{k}=c|x_{k}) \right]
\end{equation*}
$Z$ is a normalization factor over class information $c\in\gC$.

\begin{lemma}[Joint distribution with density function]
Joint distribution $q(x_i,c_i=c;f)$ is derived with $p(c_j=c|x_j)$ and duplication density function $Q(x_i,x_j;f)$.
\begin{equation*}
q(x_i,c_i=c;f)=\int_{\gX} p(c_{j}=c|x_{j})Q(x_i,x_j;f)dx_j
\end{equation*}
\label{lemma1}
\end{lemma}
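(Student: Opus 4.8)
The plan is to unfold the message-passing posterior and collapse its normalizer back into the density $Q$. First I would apply the chain rule to the left-hand side, writing $q(x_i,c_i=c;f)=q(c_i=c|x_i;f)\cdot q(x_i;f)$, and invoke the standing modeling convention that the feature extractor reshapes only the class assignment and not the observation marginal, so that $q(x_i;f)=p(x_i)$. This reduces the claim to rewriting the message-passing posterior $q(c_i=c|x_i;f)$ from the Message passing definition into integral form.

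The crux is simplifying the normalizer $Z=\sum_{c'\in\gC}\mathbb{E}_{x_k\sim\gD}\left[\qf{i}{k}p(c_k=c'|x_k)\right]$. Since $\qf{i}{k}$ carries no dependence on $c'$, I can pull it outside the class sum and use $\sum_{c'\in\gC}p(c_k=c'|x_k)=1$ to obtain $Z=\mathbb{E}_{x_k\sim\gD}\left[\qf{i}{k}\right]$. Substituting this back gives $q(c_i=c|x_i;f)=\mathbb{E}_{x_j\sim\gD}\left[\qf{i}{j}p(c_j=c|x_j)\right]\big/\mathbb{E}_{x_k\sim\gD}\left[\qf{i}{k}\right]$, so the expectation over $\gD$ in the denominator is now exactly the one appearing in the Bayes'-theorem definition of $Q$.

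Next I would convert the numerator into an integral against the marginal density $p(x_j)$ of $\gD$, namely $\mathbb{E}_{x_j\sim\gD}\left[\qf{i}{j}p(c_j=c|x_j)\right]=\int_{\gX}\qf{i}{j}\,p(c_j=c|x_j)\,p(x_j)\,dx_j$. Multiplying through by $p(x_i)$ and distributing $1/\mathbb{E}_{x_k\sim\gD}\left[\qf{i}{k}\right]$ inside the integral, the integrand becomes $p(c_j=c|x_j)\cdot\frac{\qf{i}{j}\,p(x_i)\,p(x_j)}{\mathbb{E}_{x_k\sim\gD}\left[\qf{i}{k}\right]}$. The second factor is precisely the duplication density $Q(x_i,x_j;f)$ from its definition, so the expression collapses to $\int_{\gX}p(c_j=c|x_j)\,Q(x_i,x_j;f)\,dx_j$, which is the claimed identity.

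The main obstacle is less the algebra than pinning down the two conventions the identity silently relies on: that $q(x_i;f)=p(x_i)$, i.e. the model shares its observation marginal with $\gD$, and that the $\mathbb{E}_{x_k\sim\gD}$ normalizer built into $Q$ is the very same quantity as the collapsed $Z$. Once those are made explicit, the derivation is a direct substitution, and the only genuinely nontrivial manipulation is the class-sum collapse $\sum_{c'\in\gC}p(c_k=c'|x_k)=1$ that turns $Z$ into $\mathbb{E}_{x_k\sim\gD}\left[\qf{i}{k}\right]$.
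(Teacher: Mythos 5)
Your proposal is correct and follows essentially the same route as the paper's proof: factor the joint via the chain rule with $q(x_i;f)=p(x_i)$, collapse the normalizer $Z$ to $\mathbb{E}_{x_k\sim\gD}\left[\qf{i}{k}\right]$ using $\sum_{c}p(c_k=c|x_k)=1$, and recognize the resulting integrand as $p(c_j=c|x_j)Q(x_i,x_j;f)$. Your explicit flagging of the convention $q(x_i;f)=p(x_i)$ is a point the paper leaves implicit, but the argument is otherwise identical.
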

\begin{proof}
\begin{equation*}
q(x_i,c_i=c;f)=q(c_i=c|x_i;f)p(x_{i})
\end{equation*}
\begin{equation*}
=\frac{1}{Z} \mathbb{E}_{x_j\sim \gD} \left[ \qf{i}{j} p(c_{j}=c|x_{j}) \right]p(x_{i}) \qquad\small{\leftarrow\:\text{Definition \ref{message_passing}}}
\end{equation*}
\begin{equation*}
Z=\mathbb{E}_{x_k\sim \gD} \left[ \qf{i}{k}\sum_{c\in \gC} p(c_{k}=c|x_{k}) \right]=\mathbb{E}_{x_k\sim \gD} \left[ \qf{i}{k}\right]
\end{equation*}
\begin{equation*}
\therefore q(x_i,c_i=c;f)=\mathbb{E}_{x_j\sim \gD} \left[ \frac{\qf{i}{j}}{\mathbb{E}_{x_k\sim \gD} \left[ \qf{i}{k}\right]}p(c_{j}=c|x_{j}) \right]p(x_i)
\end{equation*}
\begin{equation*}
=\int_{\gX} p(c_{j}=c|x_{j})Q(x_i,x_j;f)dx_j
\end{equation*}
Similary, we can compute $p(c_i=c,x_i)$.
\begin{equation*}
p(x_i,c_i=c)=\int_{\gX} p(c_{j}=c|x_{j})P(x_i,x_j)dx_j
\end{equation*}
\end{proof}

\begin{prop}[Hebbian Metric Learning] For every feature extractor $f$, minimizing $D_{\text{KL}}(p(x,c)||q(x,c;f))$ is equivalent to minimizing $\gL_{\text{HML}}(f;\gD)$, which can be derived as:
\begin{align*}
\small \argmin{f}D_{\text{KL}}(p||q) &=\argmin{f}D_{\text{KL}}(P||Q) \\
&=\argmin{f}\mathbb{E}_{x_i\sim \gD} \left[\mathbb{E}_{x_j\sim \gD^{+}_{i}} \left[-\log\frac{\qf{i}{j}}{\mathbb{E}_{x_k\sim \gD} \left[ \qf{i}{k}\right]}\right]\right] 
\\
&=\argmin{f}\underbrace{\left(\gI_h(f;\gD)-\gI_d(f;\gD)\right)}_{\gL_{\text{HML}}(f;\gD)}
\end{align*}
\end{prop}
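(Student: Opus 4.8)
The plan is to establish the three claimed equalities in turn: the chain runs from $\argmin{f}D_{\text{KL}}(p\|q)$, through $\argmin{f}D_{\text{KL}}(P\|Q)$, to the explicit pair-based objective, and finally to $\gI_h-\gI_d$. Only the first equality carries real content; the remaining two are algebraic rearrangements that discard terms independent of $f$ and then match the surviving expression against the definitions of $\gI_h$ and $\gI_d$. I would therefore dispatch the two easy links first to fix the form of the objective, and concentrate the effort on justifying the passage from the joint KL to the density-function KL.

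For the first equality I would exploit Lemma \ref{lemma1}, which exhibits both $p(x_i,c)$ and $q(x_i,c;f)$ as the image of $P$ and $Q$ under one and the same linear operator, namely integration against the fixed kernel $p(c_j=c|x_j)$. Concretely, I would introduce the augmented joints $\tilde P(x_i,x_j,c):=P(x_i,x_j)\,p(c_j=c|x_j)$ and $\tilde Q(x_i,x_j,c;f):=Q(x_i,x_j;f)\,p(c_j=c|x_j)$. Integrating out $x_j$ returns $p(x_i,c)$ and $q(x_i,c;f)$ by Lemma \ref{lemma1}, while summing out $c$ returns $P$ and $Q$ since the kernel sums to one. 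Expanding $D_{\text{KL}}(\tilde P\|\tilde Q)$ by the chain rule in the two orders then yields $D_{\text{KL}}(P\|Q)=D_{\text{KL}}(p\|q)+\mathbb{E}_{(x_i,c)\sim p}\big[D_{\text{KL}}(\tilde P(x_j|x_i,c)\|\tilde Q(x_j|x_i,c;f))\big]$, because the conditional of $c$ given $(x_i,x_j)$ is the identical kernel in both worlds and contributes nothing. The residual term is nonnegative (this is the data-processing inequality), so $D_{\text{KL}}(p\|q)\le D_{\text{KL}}(P\|Q)$ always. Equivalence of the minimizers is where the work concentrates: I would invoke the realizability hypothesis used elsewhere in the paper, the optimal $f^*$ of Theorem \ref{optimality} for which $\qfo{i}{j}$ is exactly the indicator of $c_i=c_j$, under which $Q(\cdot;f^*)=P$ forces $D_{\text{KL}}(P\|Q)$, the residual, and $D_{\text{KL}}(p\|q)$ to vanish simultaneously, so the two problems share this optimum. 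I expect this to be the main obstacle: without an expressiveness assumption on $f$ the residual KL need not share a minimizer with $D_{\text{KL}}(p\|q)$, so the clean "equivalence" genuinely leans on the realizability of $f^*$.

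For the second equality I would expand $D_{\text{KL}}(P\|Q)=\mathbb{E}_{(x_i,x_j)\sim P}[\log P(x_i,x_j)-\log Q(x_i,x_j;f)]$ and first verify that the $x_i$-marginal of $P$ is exactly $\gD$ while the conditional $P(x_j|x_i)$ is $\gD_i^+$, so the outer expectation becomes $\mathbb{E}_{x_i\sim\gD}\mathbb{E}_{x_j\sim\gD_i^+}$. The term $\log P(x_i,x_j)$ is independent of $f$ and drops out of the $\argmin{f}$. Substituting $Q(x_i,x_j;f)=\qf{i}{j}\,p(x_i)p(x_j)/\mathbb{E}_{x_k\sim\gD}[\qf{i}{k}]$ and discarding the further $f$-free terms $\log p(x_i)$ and $\log p(x_j)$ leaves precisely $\mathbb{E}_{x_i\sim\gD}\mathbb{E}_{x_j\sim\gD_i^+}[-\log(\qf{i}{j}/\mathbb{E}_{x_k\sim\gD}[\qf{i}{k}])]$, the middle expression. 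Finally, for the third equality I would split this logarithm: the $-\log\qf{i}{j}$ piece is by definition $I_h(x_i;f,\gD)$ after the inner expectation, whose outer expectation is $\gI_h(f;\gD)$; the $+\log\mathbb{E}_{x_k\sim\gD}[\qf{i}{k}]$ piece does not depend on $x_j$, so the $\mathbb{E}_{x_j\sim\gD_i^+}$ collapses and it equals $-\gI_d(x_i;f,\gD)$, giving $-\gI_d(f;\gD)$ in expectation. Summing the two pieces produces $\gI_h(f;\gD)-\gI_d(f;\gD)=\gL_{\text{HML}}(f;\gD)$ and closes the chain.
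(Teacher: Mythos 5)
Your proposal is correct, and for the two computational equalities it coincides with the paper's proof: the paper likewise expands $D_{\text{KL}}(P||Q)$, observes that $\pf{i}{j}p(x_j)/\mathbb{E}_{x_k\sim\gD}[\pf{i}{k}]$ integrates to one and so defines the conditional $\gD_i^+$, discards the $f$-free terms, and splits the logarithm into $I_h(x_i;f,\gD)$ and $-\gI_d(x_i;f,\gD)$. Where you genuinely diverge is the first equality. The paper disposes of it in one sentence --- ``by Lemma \ref{lemma1} the joint distributions $p$ and $q$ are the integrated form of $P$ and $Q$, thus we can optimize on $q$ by optimizing $Q$'' --- which is an assertion, not an argument: a linear push-forward does not in general preserve argmins of a KL objective. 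Your construction of the augmented joint $\tilde P(x_i,x_j,c)=P(x_i,x_j)\,p(c_j=c|x_j)$ and the two chain-rule decompositions makes this precise, yielding the identity $D_{\text{KL}}(P||Q)=D_{\text{KL}}(p||q)+(\text{nonnegative residual})$, i.e.\ the data-processing inequality, and it correctly isolates the real content of the step: equality of the two argmin sets is not automatic and leans on realizability of the $f^*$ of Theorem \ref{optimality} (equivalently Corollary \ref{cor1}), under which both KLs and the residual vanish simultaneously. This buys a more honest statement of the proposition's hypotheses at the cost of invoking an assumption the paper only introduces later; the paper's route buys brevity at the cost of leaving the first equality unjustified. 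The gap you flag is one the paper's own proof silently shares, so it should not be counted against you.
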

\begin{proof}
By Lemma \ref{lemma1}, the joint distribution $p$ and $q$ are the integrated form of $P$ and $Q$. Thus we can optimize on $q$ by optimizing $Q$ on $P$.
\begin{equation*}
\therefore \argmin{f}D_{\text{KL}}(p||q) =\argmin{f}D_{\text{KL}}(P||Q)
\end{equation*}
\begin{align*}
&\small=\argmin{f}\int_{\gX}\int_{\gX} - \frac{\pf{i}{j}p(x_i)p(x_j)}{\mathbb{E}_{x_k\sim \gD} \left[ \pf{i}{k}\right]}\log\frac{\qf{i}{j}p(x_i)p(x_j)}{\mathbb{E}_{x_k\sim \gD} \left[ \qf{i}{k}\right]}dx_i dx_j\\
&\small=\argmin{f}\int_{\gX}\left(\int_{\gX} - \frac{\pf{i}{j}p(x_j)}{\mathbb{E}_{x_k\sim \gD} \left[ \pf{i}{k}\right]}\log\frac{\qf{i}{j}}{\mathbb{E}_{x_k\sim \gD} \left[ \qf{i}{k}\right]} dx_j\right) p(x_i)dx_i
\end{align*}
Note that the coefficient of the log term can act as a probability density function.
\begin{equation}
\because \int_{\gX}\frac{\pf{i}{j}p(x_j)}{\mathbb{E}_{x_k\sim \gD} \left[ \pf{i}{k}\right]}dx_j=\frac{\mathbb{E}_{x_j\sim \gD} \left[ \pf{i}{j}\right]}{\mathbb{E}_{x_k\sim \gD} \left[ \pf{i}{k}\right]}=1
\label{prob_d_plus}
\end{equation}
Let the probability in Equation \ref{prob_d_plus} be $\gD_i^+$. Then the RHS can be rewritten with $\gD_i^+$.
\begin{align*}
\text{RHS}&=\argmin{f}\int_{\gX}\mathbb{E}_{x_j\sim \gD_i^+} \left[ -\log\frac{\qf{i}{j}}{\mathbb{E}_{x_k\sim \gD} \left[ \qf{i}{k}\right]} \right] p(x_i)dx_i\\
&=\argmin{f}\mathbb{E}_{x_i\sim \gD} \left[\mathbb{E}_{x_j\sim \gD_i^+} \left[ -\log\frac{\qf{i}{j}}{\mathbb{E}_{x_k\sim \gD} \left[ \qf{i}{k}\right]} \right] \right]\\
&=\argmin{f}\mathbb{E}_{x_i\sim \gD} \left[\mathbb{E}_{x_j\sim \gD_i^+} \left[ -\log \qf{i}{j} \right] + -\log\mathbb{E}_{x_k\sim \gD} \left[ \qf{i}{k}\right] \right]\\
&=\argmin{f}\mathbb{E}_{x_i\sim \gD} \left[I_h(x_i;f,\gD)-\gI_d(x_i;f,\gD)\right]\\
&=\argmin{f}\left(\gI_h(f;\gD)-\gI_d(f;\gD)\right).
\end{align*}
\end{proof}

\begin{coro}[Optimality of HML] Let a class distribution $p(c)$ of $\gD$ is evenly distributed:$\forall c,p(c)=1/|\gC|$. The optimal feature extractor $f^*$, which minimizes $\left(\gI_h(f;\gD)-\gI_d(f;\gD)\right)$ is derived as:
\begin{equation}
\small \qfo{i}{j}=\begin{cases}1\qquad c_i=c_j\\0\qquad c_i\neq c_j\end{cases}
\end{equation}
where the value of the objective function $\gL_{\text{HML}}(f^*;\gD)$ is $-\log |\gC|$. In other words, the lower bound of $\gL_{\text{HML}}(f;\gD)$ is formulated as:
\begin{align}
\small \gL_{\text{HML}}(f;\gD)\ge -\log |\gC|.
\end{align}
\label{cor1}
\end{coro}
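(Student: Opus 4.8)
The plan is to establish the lower bound directly and then exhibit the indicator map as a minimizer that attains it. Starting from the closed form of $\gL_{\text{HML}}$ obtained in Proposition 1, the per-sample integrand for a fixed $x_i$ of class $c_i$ is $I_h(x_i;f,\gD)-\gI_d(x_i;f,\gD)=\mathbb{E}_{x_j\sim\gD_i^+}[-\log\qf{i}{j}]+\log\mathbb{E}_{x_k\sim\gD}[\qf{i}{k}]$. Because $\gD_i^+=\gD_{c_i}$ and $p(c)=1/|\gC|$, I would rewrite the distinctiveness expectation as a class-weighted average $\mathbb{E}_{x_k\sim\gD}[\qf{i}{k}]=\frac{1}{|\gC|}\sum_{c\in\gC}s_c$, where $s_c:=\mathbb{E}_{x_k\sim\gD_c}[\qf{i}{k}]$ (for the fixed $x_i$) denotes the mean mutual duplication probability between $x_i$ and class $c$.

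The key step is Jensen's inequality for the convex map $-\log$, applied to the Hebbian term: $\mathbb{E}_{x_j\sim\gD_{c_i}}[-\log\qf{i}{j}]\ge-\log\mathbb{E}_{x_j\sim\gD_{c_i}}[\qf{i}{j}]=-\log s_{c_i}$ (assuming $s_{c_i}>0$; otherwise the Hebbian term diverges and the bound is immediate). Adding the distinctiveness term collapses the per-sample quantity to $\log\!\left(\sum_{c}s_c/(|\gC|\,s_{c_i})\right)$. Since every $s_c\ge0$ we have $\sum_c s_c\ge s_{c_i}$, so the argument of the logarithm is at least $1/|\gC|$ and the per-sample quantity is at least $-\log|\gC|$. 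Taking $\mathbb{E}_{x_i\sim\gD}$ of this pointwise inequality yields $\gL_{\text{HML}}(f;\gD)\ge-\log|\gC|$ for every $f$.

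To show the bound is sharp I would trace the two equality conditions. Jensen is tight precisely when $\qf{i}{j}$ is almost surely constant across positive pairs $x_j\sim\gD_{c_i}$, and the discarded inequality $\sum_c s_c\ge s_{c_i}$ is tight precisely when $s_c=0$ for all $c\ne c_i$, i.e. $\qf{i}{k}=0$ whenever $c_i\ne c_k$. The indicator assignment $\qfo{i}{j}=\mathbb{1}[c_i=c_j]$ meets both conditions simultaneously, and a direct substitution gives $I_h(x_i;f^*,\gD)=0$ and $\gI_d(x_i;f^*,\gD)=-\log(1/|\gC|)=\log|\gC|$, hence $\gL_{\text{HML}}(f^*;\gD)=-\log|\gC|$, certifying sharpness.

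The subtlety I expect to confront is that minimizing $\gL_{\text{HML}}$ alone does not single out the value $1$: any constant $v\in(0,1]$ on positive pairs (with $0$ on negatives) attains the same minimum $-\log|\gC|$, since $v$ cancels between the Hebbian and distinctiveness terms. To pin down $v=1$ as the statement asserts, I would appeal to the equivalence $\argmin{f}D_{\text{KL}}(P\|Q)=\argmin{f}\gL_{\text{HML}}$ from Proposition 1: the global minimizer of the KL term additionally forces $Q=P$, and since distinct classes carry distinct data distributions $\gD_c$, the latent class is (almost surely) determined by the observation, so the true density obeys $p(c_i=c_j|x_i,x_j)=\mathbb{1}[c_i=c_j]$ and therefore $\qfo{i}{j}=\mathbb{1}[c_i=c_j]$. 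This selects the indicator as the canonical minimizer while leaving the sharp bound $-\log|\gC|$ intact.
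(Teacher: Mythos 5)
Your proof is correct, and it takes a genuinely different route from the paper's. The paper argues class-wise and sequentially: it first characterizes the $f$ with $\gI_h(f;\gD)=0$ (forcing $\qf{i}{j}=1$ on positive pairs), then, \emph{within that family}, writes $-\gI_d$ in terms of class-level quantities $q_{c'c}$ and shows $-\gI_d\ge-\log|\gC|$ with equality when the off-class terms vanish. Strictly speaking that argument only verifies the value $-\log|\gC|$ at the proposed optimum and along the slice $\gI_h=0$; it does not by itself bound $\gL_{\text{HML}}(f;\gD)$ from below for arbitrary $f$, where a positive Hebbian term could in principle trade off against a larger distinctiveness term. Your pointwise argument closes exactly this gap: Jensen applied to $-\log$ on the Hebbian term, combined with $\sum_c s_c\ge s_{c_i}$, yields the lower bound for \emph{every} $f$ in one pass, and the two equality conditions you extract (constancy of $\qf{i}{j}$ on positive pairs, vanishing cross-class mass) recover the paper's characterization. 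This is a tighter and more self-contained proof of the inequality, at the modest cost of assuming $\gD_i^+=\gD_{c_i}$, which the paper also uses implicitly.

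One caveat on your final paragraph. Your observation that any constant $v\in(0,1]$ on positive pairs (with $0$ on negative pairs) also attains $-\log|\gC|$ is correct and is a genuine imprecision in the corollary's phrasing that the paper does not acknowledge. However, your proposed fix via $\argmin{f}D_{\text{KL}}(P\|Q)$ does not actually select $v=1$: the density $Q(x_i,x_j;f)$ is normalized by $\mathbb{E}_{x_k\sim\gD}[\qf{i}{k}]$, so the scale $v$ cancels and $Q$ (hence $q(x,c;f)$ via Lemma 1) is invariant under $\qf{i}{j}\mapsto v\cdot\qf{i}{j}$. The indicator is therefore a canonical representative of a one-parameter family of minimizers rather than the unique one forced by the KL objective; since the corollary's substantive claims are the lower bound and its attainment, this affects neither your proof nor the paper's, but the uniqueness language should be read accordingly.
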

\begin{proof}
We first find $f$ that minimizes $\gI_h(f;\gD)$. We define $\gD_c$ as the data distribution when the class is given by $c$. We define $\gI_{h}(f;\gD_c)$ with $\gD_c$.
\begin{equation*}
\gI_{h}(f;\gD_c):=\mathbb{E}_{x_i\sim \gD_c} \left[\gI_{h}(x_i;f,\gD_c)\right]=\mathbb{E}_{x_i\sim \gD_c} \left[\gI_{h}(x_i;f,\gD)\right]
\end{equation*}
\begin{align*}
\gI_{h}(f;\gD)&=\mathbb{E}_{x_i\sim \gD} \left[\gI_{h}(x_i;f,\gD)\right] \\
&=\mathbb{E}_{c\sim U_\gC} \left[\mathbb{E}_{x_i\sim \gD_c} \left[\gI_{h}(x_i;f,\gD)\right]\right] \\
&=\mathbb{E}_{c\sim U_\gC} \left[ \gI_{h}(f;\gD_c) \right].
\end{align*}
Therefore, $\gI_{h}(f;\gD)$ becomes zero if $\forall c, \gI_{h}(f;\gD_c)=0$. This means that $\qf{i}{j}=1$ is satisfied when $c_i=c_j$. In this case, $q$ is expressed as follows:
\begin{equation*}
\small \qf{i}{j}=\begin{cases}q_{c_ic_i}\qquad c_i=c_j\\q_{c_ic_j}\qquad c_i\neq c_j\end{cases}
\end{equation*}
where $q_{c_ic_i}=1$. Lastly, we find $f$ that maximizes $\gI_d(f;\gD)$.
\begin{align*}
-\gI_d(f;\gD)&=-\mathbb{E}_{x_i\sim \gD} \left[-\log\mathbb{E}_{x_k\sim \gD} \left[ \qf{i}{k}\right] \right]\\
&=-\mathbb{E}_{c\sim U_\gC} \left[\mathbb{E}_{x_i\sim \gD_c} \left[-\log\left(\mathbb{E}_{c'\sim U_\gC} \left[\mathbb{E}_{x_k\sim \gD_{c'}} \left[ \qf{i}{k}\right] \right]\right)\right]\right]\\
&=-\mathbb{E}_{c\sim U_\gC} \left[-\log\left(1/|\gC|\cdot\Sigma_{c'\in\gC}q_{c'c}\right)\right]\\
&=\mathbb{E}_{c\sim U_\gC} \left[\log\left(1/|\gC|\cdot(1+\Sigma_{c'\neq c}q_{c'c})\right)\right]\\
&\ge\log(1/|\gC|)=-\log|\gC|. \qquad\qquad\small{\leftarrow\:\forall c'\neq c,\:q_{c'c}=0}
\end{align*}

Therefore, $\gL_{\text{HML}}(f;\gD)\ge -\log|\gC|$ is satisfied, and the equality is satisfied when $f=f^*$.

\end{proof}

\begin{prop}[HML Bound] Let $\gD$ and $\gD'$ be the oracle and the empirical data distribution, respectively. Then the upper bound of ideal HML loss is formulated as below:
\begin{align*}
\small \gL_{\text{HML}}(f;\gD) \le \underbrace{\lambda \cdot \gI_{h}(f;\mathcal{D'})-\gI_d(f;\gD',\gM)+|\gI_d(f;\gD',\gM)-\gI_d(f;\gD)|}_{\gL_{\text{M-HML}}(f,\gM;\gD')}
\end{align*}
where $\gI_d(f;\gD',\gM)=\mathbb{E}_{x_i\sim \gD'} \left[\gI_d(x_i;f,\gM)\right]$ denotes the empirical distinctiveness information in the memory $\gM$ and $\lambda=1/(|\gC|\cdot\rho_{\min})$.
\end{prop}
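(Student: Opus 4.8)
The plan is to split $\gL_{\text{HML}}(f;\gD)=\gI_h(f;\gD)-\gI_d(f;\gD)$ into its Hebbian and distinctiveness parts, bound each against the empirical quantities, and recombine. The distinctiveness part is immediate: since $a\le|a|$ for every real number, we have $\gI_d(f;\gD',\gM)-\gI_d(f;\gD)\le|\gI_d(f;\gD',\gM)-\gI_d(f;\gD)|$, which rearranges to
\[
-\gI_d(f;\gD)\le -\gI_d(f;\gD',\gM)+|\gI_d(f;\gD',\gM)-\gI_d(f;\gD)|.
\]
This accounts for the last two terms of the target bound at essentially no cost.

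The substance lies in showing $\gI_h(f;\gD)\le\lambda\cdot\gI_h(f;\gD')$. First I would decompose both sides by latent class, reusing the argument from the proof of Corollary \ref{cor1}. The key observation is that the positive distribution $\gD_i^+$ of a sample $x_i$ of class $c$ is the class-conditional $\gD_c$, which is shared by the oracle $\gD$ and the empirical $\gD'$ and does not depend on the class marginal. Hence the per-class Hebbian information $\gI_h(f;\gD_c)=\mathbb{E}_{x_i\sim\gD_c}[\mathbb{E}_{x_j\sim\gD_c}[-\log\qf{i}{j}]]$ is well defined, and
\[
\gI_h(f;\gD)=\frac{1}{|\gC|}\sum_{c\in\gC}\gI_h(f;\gD_c),\qquad \gI_h(f;\gD')=\sum_{c\in\gC}\rho_c\,\gI_h(f;\gD_c),
\]
the first identity using the oracle's uniform marginal and the second the empirical marginal $\rho$.

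The crux is then a reweighting inequality. Because $\qf{i}{j}\in[0,1]$, each term $\gI_h(f;\gD_c)$ is non-negative; and since $\rho_c\ge\rho_{\min}$ for every class, the factor $\rho_c/\rho_{\min}\ge1$, so scaling each non-negative term by it only increases the sum:
\[
\lambda\cdot\gI_h(f;\gD')=\frac{1}{|\gC|}\sum_{c\in\gC}\frac{\rho_c}{\rho_{\min}}\gI_h(f;\gD_c)\ge\frac{1}{|\gC|}\sum_{c\in\gC}\gI_h(f;\gD_c)=\gI_h(f;\gD).
\]
Adding this to the distinctiveness bound gives $\gL_{\text{HML}}(f;\gD)\le\gL_{\text{M-HML}}(f,\gM;\gD')$. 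I expect the main obstacle to be the per-class decomposition itself --- specifically, arguing carefully that $\gD_i^+$ is marginal-independent so that the identical quantities $\gI_h(f;\gD_c)$ appear in both expansions; once that is established, non-negativity of $\gI_h(f;\gD_c)$ together with the bound $\rho_c\ge\rho_{\min}$ makes the remainder routine.
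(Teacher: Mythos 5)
Your proposal is correct and follows essentially the same route as the paper: a per-class decomposition of the Hebbian term with the reweighting bound $\rho_c/\rho_{\min}\ge 1$, combined with the add-and-subtract/absolute-value step for the distinctiveness term. Your explicit appeal to the non-negativity of $\gI_h(f;\gD_c)$ (from $\qf{i}{j}\in[0,1]$) is a small point the paper leaves implicit, but it is the same argument.
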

\begin{proof}
We first prove that $\gI_h(f;\gD)\le \lambda \cdot \gI_{h}(f;\gD')$. Suppose there exists $\rho$, which is a class distribution of the class-imbalanced data distribution $\gD'$. The class distribution of $\gD$ is a uniform distribution with probability $1/|\gC|$. Suppose $\gD$ and $\gD'$ have the same $\gD_c$ for all $c$. This means that the only difference between $\gD$ and $\gD'$ is the class distribution $\rho$.
\begin{align*}
\gI_h(f;\gD)=\sum_{c\in\gC}\left(\frac{1}{|\gC|}\cdot \gI_{h}(f;\gD_c)\right)
\end{align*}
Similiary, we compute $\gI_{h}(f;\gD')$.
\begin{equation*}
\gI_{h}(\gD';f)=\sum_{c\in\gC}\left(\rho_c\cdot \gI_{h}(f;\gD_c)\right)
\end{equation*}
Let the minimum probability of $\rho$ be $\rho_{\min}$. This means that $\rho_c/\rho_{\min} \ge 1$ for all $c\in\gC$.
\begin{equation*}
\therefore \gI_h(f;\gD)\le \frac{1}{|\gC|\cdot\rho_{\min}}\cdot \gI_{h}(f;\gD')=\lambda\cdot \gI_{h}(f;\gD')
\end{equation*}
\begin{align*}
\gI_h(f;\gD)-\gI_d(f;\gD) &\le \lambda \cdot \gI_{h}(f;\mathcal{D'})-\gI_d(f;\gD) \\
&=\lambda \cdot \gI_{h}(f;\mathcal{D'})-\gI_d(f;\gD',\gM)+\gI_d(f;\gD',\gM)-\gI_d(f;\gD) \\
&\le \lambda \cdot \gI_{h}(f;\mathcal{D'})-\gI_d(f;\gD',\gM)+|\gI_d(f;\gD',\gM)-\gI_d(f;\gD)|.
\end{align*}
\end{proof}

\begin{thm}[Optimality of M-HML]
Assume that an optimal memory $\gM^*\simeq \gD$ exists. With the memory $\gM^*$, the ideal loss  $\gL_{\text{HML}}$ and empirical loss $\gL_{\text{M-HML}}$ shares the optimal feature extractor $f^*$:
\begin{equation*}
\gL_{\text{M-HML}}(f^*,\gM^*;\gD')=\gL_{\text{HML}}(f^*;\gD)=-\log |\gC|
\end{equation*}
where mutual duplication probability with $f^*$ satisfies the following property.
\begin{equation*}
\small \qfo{i}{j}=\begin{cases}1\qquad c_i=c_j\\0\qquad c_i\neq c_j\end{cases}
\end{equation*}
\end{thm}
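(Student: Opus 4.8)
The plan is to show that the upper bound in Proposition~\ref{hml_bound} is \emph{tight} at the feature extractor $f^*$ supplied by Corollary~\ref{cor1}, and then use that tightness to transfer optimality from $\gL_{\text{HML}}$ to $\gL_{\text{M-HML}}$. Recall from Corollary~\ref{cor1} that the extractor with $\qfo{i}{j}=1$ for $c_i=c_j$ and $0$ otherwise achieves the global minimum $\gL_{\text{HML}}(f^*;\gD)=-\log|\gC|$, and that $\gL_{\text{HML}}(f;\gD)\ge-\log|\gC|$ for every $f$. My goal is to evaluate each of the three terms of $\gL_{\text{M-HML}}(f^*,\gM^*;\gD')$ and confirm that they sum to the same value $-\log|\gC|$.

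First I would dispatch the Hebbian term. Since $\qfo{i}{j}=1$ whenever $c_i=c_j$, and the Hebbian information only averages $-\log\qfo{i}{j}$ over same-class (positive) pairs, every summand equals $-\log 1=0$; hence $\gI_h(f^*;\gD')=0$ and $\lambda\cdot\gI_h(f^*;\gD')=0$. The same computation gives $\gI_h(f^*;\gD)=0$, so the implicit Hebbian discrepancy $|\lambda\cdot\gI_h(f^*;\gD')-\gI_h(f^*;\gD)|$ also vanishes, matching the proof sketch in the main text.

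The crux is the distinctiveness term, where I would invoke the assumption $\gM^*\simeq\gD$. Because $\qfo{i}{j}$ is exactly the indicator of $c_i=c_j$, for any query $x_i$ the inner expectation $\mathbb{E}_{x_j\in\gM^*}[\qfo{i}{j}]$ equals the fraction of $\gM^*$ carrying the class $c_i$; with $\gM^*\simeq\gD$ this fraction is the balanced value $1/|\gC|$, so $\gI_d(x_i;f^*,\gM^*)=\log|\gC|$ for \emph{every} $x_i$. The identical argument over the oracle gives $\gI_d(x_i;f^*,\gD)=\log|\gC|$. Since the per-sample distinctiveness is now a constant independent of $x_i$, the outer expectations over $\gD'$ and over $\gD$ produce the same number, yielding $\gI_d(f^*;\gD',\gM^*)=\gI_d(f^*;\gD)=\log|\gC|$ and hence $|\gI_d(f^*;\gD',\gM^*)-\gI_d(f^*;\gD)|=0$. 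Substituting the three evaluated terms gives $\gL_{\text{M-HML}}(f^*,\gM^*;\gD')=0-\log|\gC|+0=-\log|\gC|=\gL_{\text{HML}}(f^*;\gD)$.

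Finally, to promote this from ``the two losses agree at $f^*$'' to ``$f^*$ minimizes $\gL_{\text{M-HML}}(\,\cdot\,,\gM^*;\gD')$,'' I would chain $\gL_{\text{M-HML}}(f,\gM^*;\gD')\ge\gL_{\text{HML}}(f;\gD)\ge-\log|\gC|$ for all $f$, the first inequality from Proposition~\ref{hml_bound} and the second from Corollary~\ref{cor1}; since $f^*$ attains $-\log|\gC|$, it is a global minimizer shared by both objectives. The step I expect to be the main obstacle is making the assumption $\gM^*\simeq\gD$ rigorous: one must argue that the \emph{optimal} memory reproduces the balanced class proportions of the oracle $\gD$ rather than the skewed proportions of $\gD'$, since it is precisely this balancing that forces $\mathbb{E}_{x_j\in\gM^*}[\qfo{i}{j}]=1/|\gC|$ and collapses the two distinctiveness quantities onto each other. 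Everything else is routine substitution.
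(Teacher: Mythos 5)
Your proposal is correct and follows essentially the same route as the paper: both arguments evaluate the Hebbian terms to zero via $-\log 1=0$ on same-class pairs and use $\gM^*\simeq\gD$ together with the indicator form of $\qfo{i}{j}$ to show that $\gI_d(f^*;\gD',\gM^*)$ and $\gI_d(f^*;\gD)$ both equal $\log|\gC|$, collapsing the discrepancy term. Your final chaining $\gL_{\text{M-HML}}(f,\gM^*;\gD')\ge\gL_{\text{HML}}(f;\gD)\ge-\log|\gC|$ is a small but welcome addition that makes the shared-minimizer claim explicit, whereas the paper only verifies equality of the two loss values at $f^*$.
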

\begin{proof}
To show that two losses have the same optimal feature extractor, we first derive the difference of two losses.
\begin{align*}
|\gL_{\text{M-HML}}(f,\gM;\gD')-\gL_{\text{HML}}(f;\gD)&|=|\lambda \cdot \gI_{h}(f;\mathcal{D'})-\gI_d(f;\gD',\gM)\\
&+|\gI_d(f;\gD',\gM)-\gI_d(f;\gD)|-\gI_h(f;\gD)+\gI_d(f;\gD)|
\end{align*}
\begin{equation*}
\le|\lambda \cdot \gI_{h}(f;\mathcal{D'})-\gI_h(f;\gD)|+2\cdot|\gI_d(f;\gD',\gM)-\gI_d(f;\gD)|.
\end{equation*}

Following the same proof technique of Corollary \ref{cor1}, both $\gI_{h}(f;\mathcal{D'})$ and $\gI_h(f;\gD)$ become zero when $\qf{i}{j}=1$ with $c_i=c_j$. We then show that $|\gI_d(f;\gD',\gM)-\gI_d(f;\gD)|$ becomes zero with the feature extractor $f^*$ and memory $\gM^*\simeq \gD$.

\begin{align*}
|\gI_d(f^*,\gM^*;\gD')-\gI_d(f^*;\gD)|&=\mathbb{E}_{c\sim \rho} \left[-\log\left(\Sigma_{c'\in\gC}\rho_{c'}^{(\gM^*)}q_{c'c}\right)\right]
-\mathbb{E}_{c\sim U_\gC} \left[-\log\left(1/|\gC|\cdot\Sigma_{c'\in\gC}q_{c'c}\right)\right] \\
&=\mathbb{E}_{c\sim \rho} \left[-\log\left(\rho_{c}^{(\gM^*)}\right)\right]-\mathbb{E}_{c\sim U_\gC} \left[-\log\left(1/|\gC|\right)\right] \qquad\small{\leftarrow\:\forall c'\neq c, q_{c'c}=0}\\
&=\mathbb{E}_{c\sim \rho} \left[-\log\left(1/|\gC|\right)\right]-\mathbb{E}_{c\sim U_\gC} \left[-\log\left(1/|\gC|\right)\right]\qquad\small{\leftarrow\:\gM^*\simeq \gD}\\
&=0
\end{align*}
\begin{equation*}
\therefore \gL_{\text{M-HML}}(f^*,\gM^*;\gD')=\gL_{\text{HML}}(f^*;\gD)=-\log |\gC|.
\end{equation*}

\end{proof}

\begin{defn}[\textit{Safeness} of data filtering] Let the memory $\gMp$ be updated as $\gMp'$ after the data filtering process with a policy $\pi$. A filtering process is \textit{safe} when the following inequality is satisfied when the optimal feature extractor $f^*$ is given.
\begin{equation*}
\gI_d(f^*,\gMp;\gD') \le \gI_d(f^*,\gMp';\gD')
\end{equation*}
\begin{equation*}
\qfo{i}{j}=\begin{cases}
1\qquad c_i=c_j\\
0\qquad c_i\neq c_j
\end{cases}
\end{equation*}
\label{safeness}
\end{defn}

\begin{prop}[\textit{Safeness} of DUEL policy] Let the memory $\gMp$ be updated as $\gMp'$ after the data filtering process with DUEL policy $\pi_{\text{DUEL}}$. This filtering process is \textit{safe} with $f^*$ which satisfies:
\begin{equation*}
\gI_d(f^*,\gMp;\gD') \le \gI_d(f^*,\gMp';\gD')
\end{equation*}
\end{prop}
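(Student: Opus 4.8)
The plan is to use the optimal feature extractor to collapse the soft mutual duplication probabilities into hard class indicators, which turns the distinctiveness information into a function of the memory's class-count vector alone, and then to recognize one step of $\pi_{\text{DUEL}}$ as an entropy-increasing mass transfer. First I would substitute $f^*$: since $\qfo{i}{j}=1$ exactly when $c_i=c_j$ and $0$ otherwise, for a finite memory $\gMp$ of size $K$ the inner expectation collapses to a count. Writing $n_c$ for the number of elements of class $c$ in $\gMp$ and $\rho_c^{(\gMp)}=n_c/K$, I get $\gI_d(x_i;f^*,\gMp)=-\log\rho_{c_i}^{(\gMp)}$, which depends on $x_i$ only through its class $c_i$. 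Averaging the per-element distinctiveness over the memory then yields $\gI_d(f^*,\gMp;\gD')=-\sum_{c\in\gC}\rho_c^{(\gMp)}\log\rho_c^{(\gMp)}$, i.e.\ exactly the Shannon entropy $H(\rho^{(\gMp)})$ of the memory's empirical class distribution, so the claim reduces to showing that a single $\pi_{\text{DUEL}}$ step never decreases this entropy.

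Next I would track the class counts through the update. Adding the new point $x_{\text{new}}$ of class $c_{\text{new}}$ produces an augmented memory of size $K+1$ with counts $n_{c_{\text{new}}}+1$ and $n_c$ for $c\neq c_{\text{new}}$. Because $\gI_d(x_j;f^*,\cdot)=-\log\!\big(n_{c_j}/(K+1)\big)$ is decreasing in the count $n_{c_j}$, the element selected by $\pi_{\text{DUEL}}$ (minimum distinctiveness) lies in the class $c^*$ that maximizes the augmented count. If $c^*=c_{\text{new}}$ the addition and deletion cancel, the class-count vector is unchanged, and the inequality holds with equality. Otherwise the net effect $\gMp\to\gMp'$ raises $c_{\text{new}}$'s count by one and lowers $c^*$'s count by one, both memories having size $K$; moreover, since $c^*$ is the argmax of the augmented counts we have $n_{c^*}\ge n_{c_{\text{new}}}+1$, so the update moves exactly one unit from a strictly larger bin to a strictly smaller one, i.e.\ a Pigou--Dalton (Robin Hood) transfer.

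I would then close the argument by concavity. Setting $g(t)=-(t/K)\log(t/K)$, only the two affected bins contribute, and the entropy change equals $\big(g(n_{c_{\text{new}}}+1)-g(n_{c_{\text{new}}})\big)-\big(g(n_{c^*})-g(n_{c^*}-1)\big)$. Since $g$ is concave its forward differences $g(t+1)-g(t)$ are nonincreasing in $t$, and the integer inequality $n_{c_{\text{new}}}\le n_{c^*}-1$ established above gives $g(n_{c_{\text{new}}}+1)-g(n_{c_{\text{new}}})\ge g(n_{c^*})-g(n_{c^*}-1)$; equivalently, entropy is Schur-concave and a Robin Hood transfer cannot decrease it. Hence $\gI_d(f^*,\gMp;\gD')\le\gI_d(f^*,\gMp';\gD')$, which is \emph{safeness}.

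The step I expect to be delicate is pinning down which averaging makes the statement true: if the outer expectation were taken against the imbalanced $\gD'$ rather than against the memory, the quantity would become the cross-entropy $-\sum_{c}\rho_c\log\rho_c^{(\gMp)}$, and a single DUEL step can in fact \emph{decrease} that, so the proof hinges on reading $\gI_d(f^*,\gMp;\gD')$ as the memory-averaged distinctiveness (the class entropy reported empirically in the experiments). Once that reading is fixed, the only real work is the concavity/majorization estimate together with the bookkeeping of tie-breaking and the degenerate case $c^*=c_{\text{new}}$; the reliance on $f^*$ is essential, since for a general $f$ the soft probabilities $\qf{i}{j}$ do not reduce to counts and the clean entropy interpretation is lost.
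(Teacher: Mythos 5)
Your argument is internally correct, but it proves the monotonicity of a different quantity than the one the paper's definition pins down, and the comparison is worth spelling out. The paper defines $\gI_d(f;\gD',\gM)=\mathbb{E}_{x_i\sim\gD'}[\gI_d(x_i;f,\gM)]$ with the \emph{outer} expectation over $\gD'$, so under $f^*$ the quantity is the cross-entropy $-\sum_c\rho_c\log(N_c/K)$, not the memory's Shannon entropy $-\sum_c(N_c/K)\log(N_c/K)$ that you analyze. The paper's own proof works directly with the cross-entropy: it first uses the initialization assumption $\gM\simeq\gD'$ to argue the ejected element has class $c_{\max}$, then writes the change as a two-term expression weighted by $\rho_{\max}$ and $\rho_{c_{\text{new}}}$, lower-bounds $\rho_{\max}$ by $\rho_{c_{\text{new}}}$, and finishes with the algebraic identity $(N_{c_{\max}}-1)(K'-N_{c_{\max}})=N_{c_{\max}}(K'-N_{c_{\max}}+1)-K'$. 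Your route — collapse to class counts, identify the DUEL step as a unit Pigou--Dalton transfer from the largest bin to a smaller one, and invoke concavity of $t\mapsto-(t/K)\log(t/K)$ via nonincreasing forward differences — is cleaner, needs neither $\gM\simeq\gD'$ nor $\rho_{\max}\ge\rho_{c_{\text{new}}}$, and correctly identifies the ejected class as the argmax of the \emph{augmented} counts rather than assuming it is $c_{\max}$. Your caveat about the cross-entropy reading is also substantively right: for arbitrary memory counts a single DUEL step can decrease $-\sum_c\rho_c\log(N_c/K)$ (e.g.\ $\rho_{\max}/\rho_{c_{\text{new}}}=5$ but $N_{c_{\max}}/N_{c_{\text{new}}}=10$); what rescues the paper's version is precisely the hypothesis that the memory starts with counts proportional to $\rho$, which your proof discards. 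So the two proofs buy different things: the paper stays faithful to its stated definition at the cost of extra hypotheses and a more fragile computation, while yours gives a robust, assumption-light argument for the memory-entropy variant (the quantity actually tracked in the experiments) but does not, as written, establish the inequality for the $\gD'$-averaged $\gI_d$ that the proposition literally asserts. If you want your argument to stand as a proof of the stated proposition, you must either justify replacing the outer expectation over $\gD'$ by the uniform average over the memory, or add the paper's $\gM\simeq\gD'$ assumption and redo the concavity estimate for the $\rho$-weighted sum.
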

\begin{proof}
First, we show which item is replaced with the policy $\pi_{\text{DUEL}}$. In this case, let the memory be initialized with $\gM\simeq\gD'$.
\begin{align*}
\argmin{j\in\{1..K\}}\gI_d(x_j;f^*,\gD')&=\argmin{j\in\{1..K\}}\left(-\log\mathbb{E}_{x_k\sim \gD'} \left[ \qf{j}{k}\right]\right)\\
&=\argmin{j\in\{1..K\}}\left(-\log\left(\rho_{c_j}\right)\right)\\
&=\argmin{j\in\{1..K\},c_j=c_{\max}}\left(-\log(\rho_{\max})\right).
\end{align*}
So we assume that the data to be replaced has the class $c_{\max}$. Since the representations with the same class are identical to $f^*$, the distinctiveness information is not changed if the new data $x_{\text{new}}$ has the class $c_{\max}$.

Let the class of the data $x_{\text{new}}$ be $c_{\text{new}}\neq c_{\max}$. During the replacement, the data with classes other than $c_{\text{new}}$ and $c_{\max}$ will remain in memory. Let $N_{c}$ be the number of elements in memory with class $c$. Then $\sum_{c\in\gC}N_{c}=K$ and $0<N_{c_{\max}}+N_{c_{\text{new}}}=K'\le K$ are satisfied.
\begin{equation*}
\gI_d(f^*,\gMp';\gD')-\gI_d(f^*,\gMp;\gD')=\rho_{\max}\left(-\log\frac{N_{c_{\max}}-1}{N_{c_{\max}}}\right)-\rho_{c_{\text{new}}}\left(-\log\frac{N_{c_{\text{new}}}+1}{N_{c_{\text{new}}}}\right)
\end{equation*}
\begin{align*}
&=\rho_{\max}\left(-\log\frac{N_{c_{\max}}-1}{N_{c_{\max}}}\right)-\rho_{c_{\text{new}}}\left(-\log\frac{K'-N_{c_{\max}}+1}{K'-N_{c_{\max}}}\right)\\
&\ge\rho_{c_{\text{new}}}\left(-\log\frac{N_{c_{\max}}-1}{N_{c_{\max}}}\right)-\rho_{c_{\text{new}}}\left(-\log\frac{K'-N_{c_{\max}}+1}{K'-N_{c_{\max}}}\right)\\
&=\rho_{c_{\text{new}}}\left(-\log\frac{(N_{c_{\max}}-1)(K'-N_{c_{\max}})}{N_{c_{\max}}(K'-N_{c_{\max}}+1)}\right)\\
&=\rho_{c_{\text{new}}}\left(-\log\frac{N_{c_{\max}}(K'-N_{c_{\max}}+1)-K'}{N_{c_{\max}}(K'-N_{c_{\max}}+1)}\right)>0
\end{align*}
\begin{equation*}
\therefore \gI_d(f^*,\gMp;\gD') \le \gI_d(f^*,\gMp';\gD')
\end{equation*}
\end{proof}

\section{B. Memory-integrated Hebbian Metric Learning and Conventional Metric Learning}
\label{apdx_remark_convention}

In this section, we provide the mathematical analyses to show the relationship between Hebbian Metric Learning and conventional metric learning. These analyses are based on the partial solution of Memory-integrated Hebbian Metric Learning with the condition $\gD'=\gD$, which is often assumed in the metric learning domain. The rewritten form of Equation \ref{emp_hml_ineq} with this condition is as follows.
\begin{align}
\argmin{f,\gM}(\gI_h(f;\gD)-\gI_d(f;\gD,\gM)+|\gI_d(f;\gD,\gM)-\gI_d(f;\gD)|)
\label{emp_hml_opt_same}
\end{align}
With $\qf{i}{j}=\exp{((f(x_i)^\top f(x_j)-1)/\tau)}$ in Equation \ref{emp_hml_opt_same}, InfoNCE loss function with $0\le\epsilon\le1$ can be derived from the Memory-integrated Hebbian Metric Learning.
\begin{align*}
\gI_h(f;\gD)&-\gI_d(f;\gD,\gM)\\
&=\mathbb{E}_{x_i\sim \gD} \left[\mathbb{E}_{x_j\sim \gD^{+}_{i}} \left[-\log \frac{K\cdot\qf{i}{j}}{\sum_{k=1}^K\qfmm{i}{k}}\right]\right]\\
&=\mathbb{E}_{x_i\sim \gD} \left[\mathbb{E}_{x_j\sim \gD^{+}_{i}} \left[-\log \frac{\qf{i}{j}}{\sum_{k=1}^K\qfmm{i}{k}}\right]\right]-\log K\\
&=\mathbb{E}_{x_i\sim \gD} \left[\mathbb{E}_{x_j\sim \gD^{+}_{i}} \left[-\log \frac{\exp{((f(x_i)^\top f(x_j)-1)/\tau)}}{\sum_{k=1}^K\exp{((f(x_i)^\top f(x_k)-1)/\tau)}}\right]\right]-\log K\\
&=\mathbb{E}_{x_i\sim \gD} \left[\mathbb{E}_{x_j\sim \gD^{+}_{i}} \left[-\log \frac{\exp{(f(x_i)^\top f(x_j)/\tau)}}{\sum_{k=1}^K\exp{(f(x_i)^\top f(x_k)/\tau)}}\right]\right]-\log K 
\end{align*}
\begin{equation*}
\small \le \mathbb{E}_{x_i\sim \gD} \left[\mathbb{E}_{x_j\sim \gD^{+}_{i}} \left[-\log \frac{\exp{(f(x_i)^\top f(x_j)/\tau)}}{\epsilon\cdot\exp{(f(x_i)^\top f(x_j)/\tau)}+\sum_{k=1}^K\exp{(f(x_i)^\top f(x_k)/\tau)}}\right]\right]-\log K
\end{equation*}

Unfortunately, the term $|\gI_d(f;\gD,\gM)-\gI_d(f;\gD)|$ has not been sufficiently discussed in previous work, although the former term has been defined in various forms. In metric learning, the data in memory $\gM$ are used as \textit{negative} samples, which comes from contrastive learning. On the other hand, self-supervised learning methods often use negative samples, which are drawn i.i.d. from the data distribution. This means that SSL uses $\gI_d(f;\gD,\gM)$ as an approximation of $\gI_d(f;\gD)$. As the amount of data in memory increases, $\gI_d(f;\gD,\gM)$ becomes more accurate, and this term will almost certainly become zero. At the same time, performance in downstream tasks gradually increases due to the decrease between $\gI_d(f;\gD,\gM)$ and $\gI_d(f;\gD)$. This supports the claim that reducing this term is important to improve the accuracy of downstream tasks in self-supervised learning.

\section{C. General Algorithm for Memory-integrated Hebbian Metric Learning}
\label{apdx_empirical_hml}

In this section, we discuss a general algorithm for implementing Memory-integrated Hebbian Metric Learning. 
Assuming that the data distribution changes over time, we define the data distribution as a function $\gD_{t}$ representing the distribution at time $t$. Since there is no termination condition for the agent's learning, we can represent a single learning process as an algorithm.

Algorithm \ref{alg-emp-hml} describes the updating of the feature extractor and the memory for a single iteration. At each time step $t$, the agent perceives new observations. The acquired data is then used to update the model parameter $\theta$ using the Equation \ref{emp_hml}. Then, using the memory management policy $\pi$, the memory $\gMp'$ is obtained by updating with the new data. However, this update may not be \textit{safe}, as described in Definition \ref{safeness}. If the distinctiveness information does not increase with this process, the memory is not updated.

\begin{algorithm}[ht]
    \textbf{Model : } feature extractor $f_\theta$, memory $\gMp$, memory management policy $\pi$ \\
    \textbf{Input : } data distribution $\gD_t$ with time $t$, batch size $B$, memory size $K$, learning rate $\eta$ \\
    \textbf{Output :} trained feature extractor $f_{\theta^*}$, updated memory $\gMp'$ 
    \begin{algorithmic}[1]
        \STATE $\{(d_{b},d_{b}^+)\}_{b=1}^B \leftarrow \text{Observation}(\gD_t)$ \hfill $\triangleright$ Observation from the Environment
        \STATE Compute $\gL_{\text{M-HML}}(\{d_{b}\}_{b=1}^B,\{d_{b}^+\}_{b=1}^B,\gMp;f_\theta)$ \hfill $\triangleright$ Memory-integrated HML with Equation \ref{emp_hml}
        \STATE $\theta\leftarrow \theta - \eta\nabla_{\theta}\gL_{\text{M-HML}}$
        \STATE $\gMp' \leftarrow \text{filter}(\gMp,\{d_{b}\}_{b=1}^B;\pi,f_\theta)$ \hfill $\triangleright$ Active Data Filtering with Equation \ref{mem_optimize}
        \IF {$\gI_d(f_\theta,\gMp;\gD_t) \le \gI_d(f_\theta,\gMp';\gD_t)$}
            \STATE $\gMp \leftarrow \gMp'$
        \ENDIF
    \end{algorithmic}
    \caption{Memory-integrated Hebbian Metric Learning}
    \label{alg-emp-hml}
\end{algorithm}

Additionally, for high reproducibility, we aim to present a generalized code used to train the frameworks in a PyTorch style. In this case, the data distribution is fixed because we focus on the class-imbalance in this work. Algorithm \ref{alg-pytorch-train} provides the pseudo-code of the implemented version. A single training iteration can be divided into three small steps. When new data is acquired, the initial step involves resampling additional data from memory, followed by optimizing the loss function using these samples. Subsequently, the process of updating the model and memory is added based on the model's architecture.

\begin{algorithm}[ht]
    \small
    \begin{algorithmic}[1]
        \STATEPTCOMMENT{0}{USING\_MEMORY: MoCo, D-MoCo, D-SimCLR}
        \STATEPTCOMMENT{0}{MOMENTUM\_UPDATE: MoCo, D-MoCo, BYOL}
        \STATEPTCODE{0}{for target, positive in loader:}
        \STATEPTCOMMENT{1}{Resampling from the memory}
        \STATEPTCODE{1}{if model.\textit{USING\_MEMORY}:}
        \STATEPTCODE{2}{negative = model.sample\_from\_memory(neg\_size)}
        \INLINEPTCOMMENT{Additional negative samples}
        \STATEPTCODE{1}{else:}
        \STATEPTCODE{2}{negative = None}
        \STATEPTBREAK
        \STATEPTCOMMENT{1}{Projection and compute loss function}
        \STATEPTCODE{1}{loss = model.forward(target, positive, negative)}
        \STATEPTCODE{1}{optimizer.zero\_grad()}
        \STATEPTCODE{1}{loss.backward()}
        \STATEPTCODE{1}{optimizer.step()}
        \STATEPTBREAK
        \STATEPTCOMMENT{1}{Model and memory update}
        \STATEPTCODE{1}{if model.\textit{MOMENTUM\_UPDATE}:}
        \STATEPTCODE{2}{model.\_momentum\_update()}
        \STATEPTCODE{1}{if model.\textit{USING\_MEMORY}:}
        \STATEPTCODE{2}{model.\_memory\_update(target)}
        \STATEPTBREAK
        \STATEPTCOMMENT{1}{Scheduler update}
        \STATEPTCODE{1}{scheduler.step()}
    \end{algorithmic}
    \caption{PyTorch-style pseudocode of generalized training process (Implemented Version)}
    \label{alg-pytorch-train}
\end{algorithm}

\section{D. Optimization of $\boldsymbol{\pi_{\text{DUEL}}}$ Implementation}
\label{apdx_resource_usage}

$\pi_{\text{DUEL}}$ selects the element with the lowest distinctiveness information among the elements in the memory. However, when new data arrives in batches, repeatedly calculating distinctiveness information for each replacement to find the optimal element introduces computational redundancy and reduces efficiency. To address this problem, we replace the calculation of distinctiveness information with a summation over $\qfm{i}{j}$. Since the $-\text{logsumexp}$ function is monotonically decreasing, the replacement calculation using distinctiveness information is equivalent to using a summation over $\qf{i}{j}$. In this case, we call this summation term as the metric of \textit{score}.
\begin{equation*}
J=\argmin{j\in\{1..K\}}\gI_d(x_j;f,\gMp)=\argmax{j\in\{1..K\}}\sum_{k=1}^{K}\qfm{j}{k}.
\end{equation*}

The DUEL process, as a sequential operation that updates an element in a single replacement, is not inherently optimized for GPU computation. To maximize parallelism, we computed $q_{ij}=\qf{i}{j}$ for all pairs within the memory and input batches. We then efficiently computed the \textit{score}s using a selection array indicating the presence of elements in the memory. A visualization of this process can be found in Figure \ref{apdx_d_vis_alg}. If we define the representation as Z-dimensional, the memory size as K, and the batch size as B, the complexity without any optimizations would be $O(ZBK^2)$. However, by using local insertion and deletion, the complexity can be reduced to $O(Z(B+K)^2)$. Algorithm \ref{alg-pytorch-duel} represents the PyTorch-style pseudocode of the DUEL policy that incorporates these optimizations.

\begin{figure}[ht]
\centering
\includegraphics[width=0.8\textwidth]{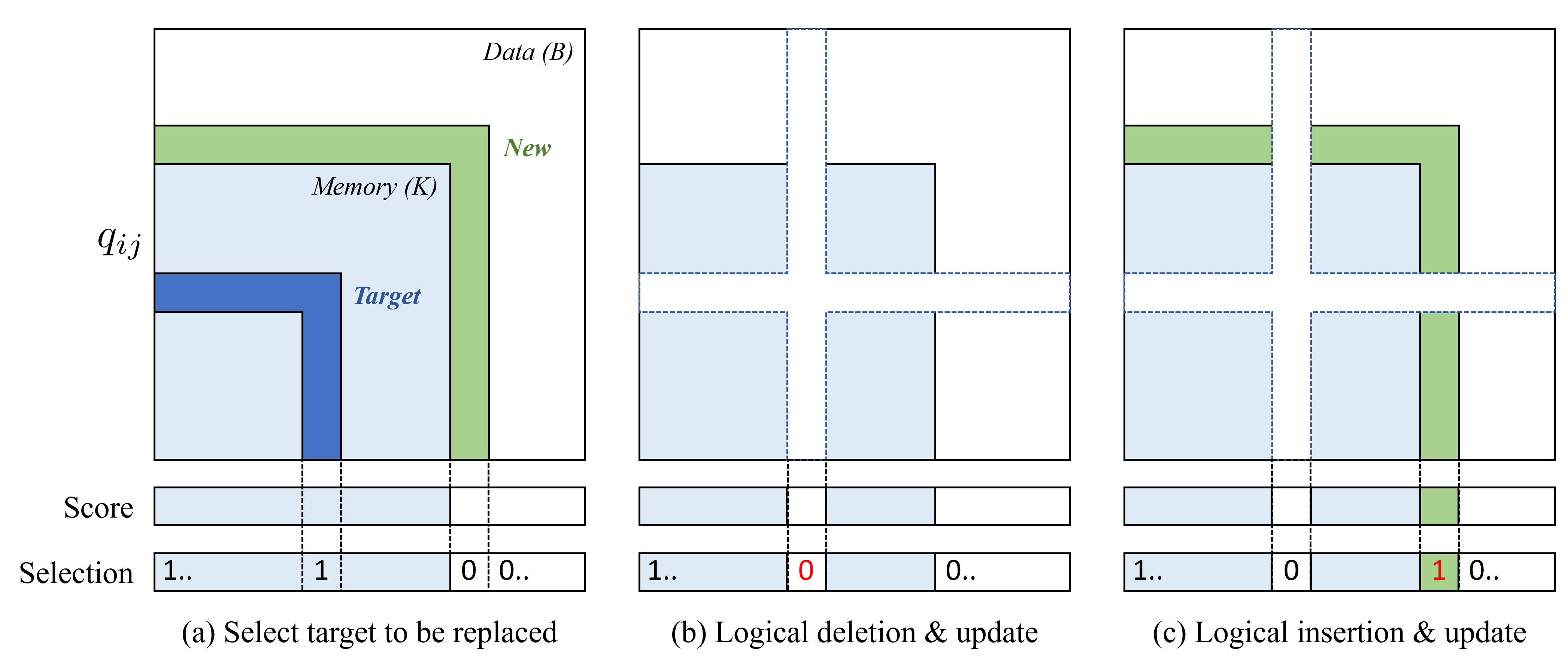}
\caption{Visualization of the algorithm of $\pi_{\text{DUEL}}$. 
By maintaining a selection array, the agent can logically proceed with replacements in the memory, which can increase the efficiency of the process.}
\label{apdx_d_vis_alg}
\end{figure}

\begin{algorithm}[ht]
    \small
    \begin{algorithmic}[1]
        \STATEPTCOMMENT{0}{B: Batch size, K: Memory size}
        \STATEPTCOMMENT{0}{for experiments, score\_function(x) = (1 + x) / 2}
        \STATEPTCODE{0}{def update\_DUEL(memory, p\_target):}
            \STATEPTCODE{1}{comb = torch.cat((memory, p\_target),dim=0)}
            \STATEPTBREAK
            \STATEPTCOMMENT{1}{If memory is not full}
            \STATEPTCODE{1}{if len(memory) < K:}
                \STATEPTCODE{2}{return comb}
            \STATEPTCODE{1}{else:}
                \STATEPTCOMMENT{2}{Build score matrix}
                \STATEPTCODE{2}{score\_matrix = score\_function(torch.mm(comb, comb.transpose(0,1)))}
                \STATEPTBREAK
                \STATEPTCOMMENT{2}{Initialize score and selection array}
                \STATEPTCODE{2}{score\_memory = score\_matrix[:K,:K]}
                \STATEPTCODE{2}{selection = torch.tensor([i < K for i in range(K + B)])}
                \STATEPTCODE{2}{score = torch.cat((score\_memory.sum(-1), torch.zeros(B,)),0)}
                \STATEPTBREAK
                \STATEPTCOMMENT{2}{DUEL policy}
                \STATEPTCODE{2}{for i in range(K, K + B):}
                    \STATEPTCOMMENT{3}{(a) Select target}
                    \STATEPTCODE{3}{J = torch.argmax(score)}
                    \STATEPTBREAK
                    \STATEPTCOMMENT{3}{(b) Logical deletion}
                    \STATEPTCODE{3}{score -= score\_matrix[J] * selection.double()}
                    \STATEPTCODE{3}{selection[J] = False}
                    \STATEPTCODE{3}{score[J] = 0.}
                    \STATEPTBREAK
                    \STATEPTCOMMENT{3}{(c) Logical insertion}
                    \STATEPTCODE{3}{t = score\_matrix[i] * selection.double()}
                    \STATEPTCODE{3}{score += t}
                    \STATEPTCODE{3}{selection[i] = True}
                    \STATEPTCODE{3}{score[i] = t.sum() + \textit{MAX\_SCORE}}
                \STATEPTCODE{2}{return comb[selection]}
    \end{algorithmic}
    \caption{PyTorch-style pseudocode of DUEL policy (Implemented Version)}
    \label{alg-pytorch-duel}
\end{algorithm}

The following table shows the memory and time consumption of the existing frameworks and their respective DUEL frameworks. All experiments are conducted using a single GPU with 24GB of VRAM or more. Elapsed time and VRAM usage are reported using the NVIDIA RTX 3090 TI as a reference. While the DUEL framework requires more time than the existing frameworks, we have observed that additional training does not necessarily lead to improved performance. Further analysis of this observation is discussed in Appendix F.

\begin{table}[ht]
\centering
\renewcommand*{\arraystretch}{1.15}
\begin{tabular}{c|cccc}
\Xhline{3\arrayrulewidth}
& MoCoV2 & SimCLR & D-MoCo & D-SimCLR \\
\hline
Time (h)& 9.37 & 13.47 & 10.99$\scriptstyle(+1.62)$ & 17.34$\scriptstyle(+3.87)$ \\
VRAM (MB) & 7764 & 13196 & 7764$\scriptstyle(+0)$ & 13198$\scriptstyle(+2)$ \\
\Xhline{3\arrayrulewidth}
\end{tabular}
\caption{Resource usage (CIFAR-10, 160k steps with batch size 256, NVIDIA RTX 3090 Ti).}
\end{table}

\section{E. Experimental Detail}
\label{apdx_experimental_detail}
\subsection{E.1. Model Description}
For comparative studies, we design different models based on the MoCoV2 and SimCLR frameworks. The DUEL framework, with its additional memory, is applied to SimCLR. In this case, we retrieve additional negative samples from the memory to complement the batch-wise negative samples used in the original framework. Since MoCoV2 also uses batch-wise negative samples, the structural difference between D-MoCo and D-SimCLR lies in the form of the retrieved data. Figure \ref{apdx_e_vis} provides visualizations for the different implementations.

\begin{figure}[ht]
\centering
\includegraphics[width=0.8\textwidth]{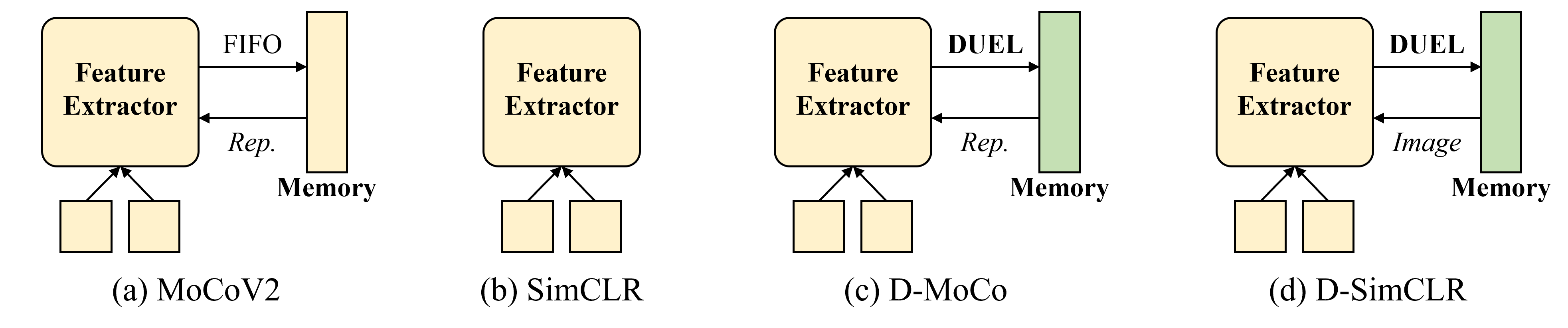}
\caption{Visualization of the DUEL Framework and its origin models used in the experiments.}
\label{apdx_e_vis}
\end{figure}

We apply a stop-gradient operation to the data retrieved from the memory to improve the efficiency of the learning algorithm. In the case of D-SimCLR, the retrieved data is used to extract representations using the current feature extractor. In the implementation, we experiment with setting the value of $\epsilon$ in Equation 17 of general InfoNCE to 1. We also include additional comparative experiments where we set $\epsilon$ to 0, and use only the elements in the memory are used as negative samples. See Appendix F for the quantitative result.

\subsection{E.2. Description of Datasets and Hyperparameters}

\paragraph{CIFAR-10}{
In the experiments with CIFAR-10, we use ResNet-50 as the backbone for the feature extractor. To address the issue of image size, we follow the approach of Chen et al.~\citep{chen2020simple} and reduce the kernel size of the first CNN layer in the resnet. The resnet features are projected into a 256-dimensional space using a single projection layer and then normalized. For all models, we set the value of $\tau$ to 0.5, and for the MoCo variants, we set the momentum factor to 0.9. The value of $\lambda$ for Barlow Twins is set to 5e-3, as suggested in the original paper. The learning rate starts at 0.05 and decays using a cosine scheduler. The adam optimizer~\citep{kingma2014adam} is used for training. To investigate the effect of the number of training steps on performance, we train for 160k steps. In all experiments, the memory size is set to 2048. For D-SimCLR, 256 samples were randomly extracted from memory and used for additional negative samples. For BYOL, two DNN layers are used as projection layers and the dimension of the hidden layer is set to 2048.
}

\paragraph{STL-10}{
For STL-10, we also use ResNet-50 as the backbone, and no specific modifications are made due to the larger image size. Since only downstream performance is evaluated in this experiment, a 2-layer projection layer with ReLU activation is used. The images are embedded in a 512-dimensional space by the projection layer. We train for 40k steps. The hyperparameter settings are the same as in CIFAR-10, except for those mentioned above. For BYOL, two DNN layers are used as projection layers and the dimension of the hidden layer is set to 4096. And we apply batch normalization to the intermediate layers of BYOL.
}

\paragraph{ImageNet-LT}{
ImageNet-LT~\citep{liu2019large}, which consists of a total of 115.8k images with 1000 classes with a long-tailed class distribution: the most frequent class has 1280 images and the least frequent class has only 5 images. We resize the image to 64$\times$64. We also use the ResNet-50 as the backbone, with a projection layer of three 512-dimensional DNN layers. ReLU is used as the activation function. The model is trained with a batch size of 256 for 160k steps. Other settings are the same as in the STL-10 experiments. For downstream tasks, we use the validation set of the original ImageNet dataset and Tiny-ImageNet~\citep{le2015tiny}, which is another modified subset of ImageNet. We also resize the images to fit the model.
}

\paragraph{Augmetation}{
For augmentation, we employ color distortion and horizontal flip as suggested by Chen et al~\citep{chen2020simple}. Color distortion is applied with a probability of 0.5, while horizontal flip is applied with a probability of 0.75. Color distortion includes color jittering and grayscale.
}

\paragraph{Linear Probing}{
The trained CNN layers of the model were frozen, and only one linear layer was added to perform the classification task. The training was carried out using the Adam optimizer with a weight decay of 1e-6. The learning rate was set to 0.001 and the batch size was 256, and the training was performed for a total of 100 epochs. For ImageNet-LT, an additional learning rate decay was applied to reduce the learning rate by half every 10 epochs.
}

\clearpage

\section{F. Experimental Result}
\label{apdx_experimental_result}

This section presents additional experimental settings or further analysis of experimental results not covered in the main experiment. In the case of CIFAR-10, a larger number of training steps was used to observe the effect of extended training on the model. The performance of the model was also evaluated on the STL-10 and ImageNet-LT. In addition, a more detailed analysis of the mechanisms behind the DUEL policy is included.

\subsection{F.1. The Effect of Extended Training}

To examine the performance changes in our DUEL framework, we perform evaluations at checkpoints during the training process. Figure \ref{linear_probing_step} shows the plotted model performance over time for the proposed implementations and their original frameworks. It is evident that the proposed framework consistently outperforms the existing frameworks. In particular, in the case of SimCLR, we observe a drop in performance after the mid-training phase, which can be analyzed as an effect of overfitting. This suggests that additional training may have a negative impact on the performance of the model.

\begin{figure}[ht]
    \begin{center}
    \begin{subfigure}[b]{0.34\textwidth}
    \includegraphics[width=\textwidth]{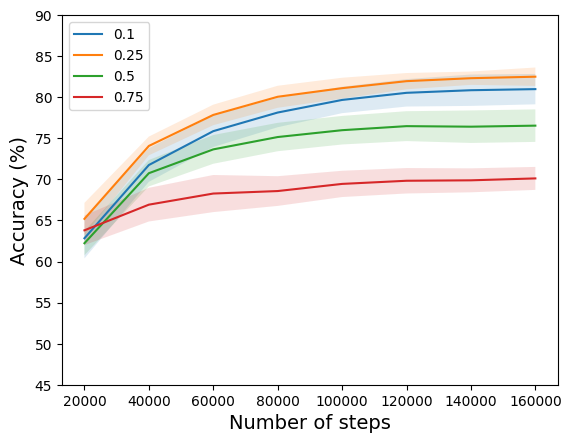}
    \caption{Linear probing accuracy (MoCo, \%).}
    \end{subfigure}%
    \hspace{0.5em}
    \begin{subfigure}[b]{0.34\textwidth}
    \includegraphics[width=\textwidth]{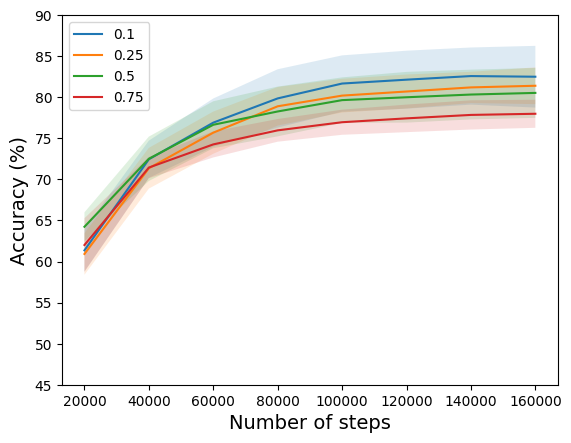}
    \caption{Linear probing accuracy (D-MoCo, \%).}
    \end{subfigure}
    \begin{subfigure}[b]{0.34\textwidth}
    \includegraphics[width=\textwidth]{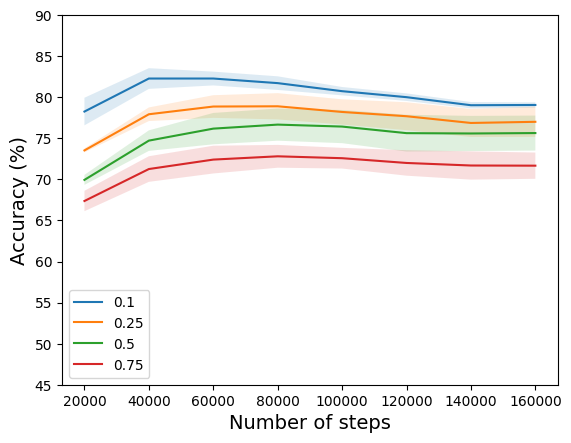}
    \caption{Linear probing accuracy (SimCLR, \%).}
    \end{subfigure}%
    \hspace{0.5em}
    \begin{subfigure}[b]{0.34\textwidth}
    \includegraphics[width=\textwidth]{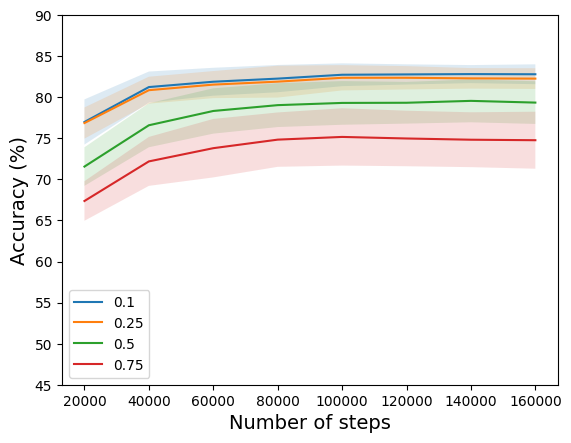}
    \caption{Linear probing accuracy (D-SimCLR, \%).}
    \end{subfigure}
    \end{center}
    \caption{Visualization of the change in linear probing accuracy along training steps in the comparative models. The translucent interval in each graph represents the standard deviation.}
    \label{linear_probing_step}
\end{figure}

We observe that our framework has a relatively large standard deviation in performance, and we also observe that the arrangement of representations in the early stages of training can affect subsequent performance. This phenomenon can be attributed to the use of an uncertain feature extractor in the early stages. Further discussion is needed to explore potential approaches to mitigate this problem.

\clearpage
\subsection{F.2. Additional Analyses on Active Memory and Latent Space}

The DUEL policy involves replacing data from the dominant class, thereby increasing the entropy of the class distribution. Figure \ref{class_entropy} illustrates the entropy changes in the class distribution of items in memory during the replacement process. We can observe a gradual increase in entropy from the initialized state with $n=|\gM|$ (denoted as 1M) to the final state with $n=5\cdot|\gM|$ (denoted as 5M). We also compare the properties of the learned features between MoCo and D-MoCo by calculating the intra-class variance and the inter-class similarity for each class. Figure \ref{intra-inter} visualizes the results.

\begin{figure}[ht]
    \begin{center}
    \begin{subfigure}[ht]{0.28\textwidth}
    \includegraphics[width=\textwidth]{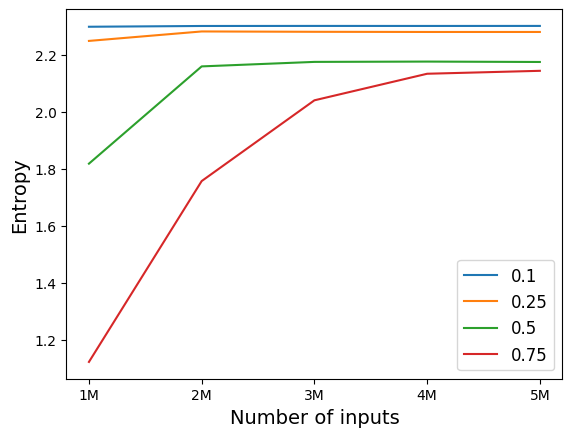}
    \caption{Class entropy}
    \label{class_entropy}
    \end{subfigure}%
    \begin{subfigure}[ht]{0.3\textwidth}
    \includegraphics[width=\textwidth]{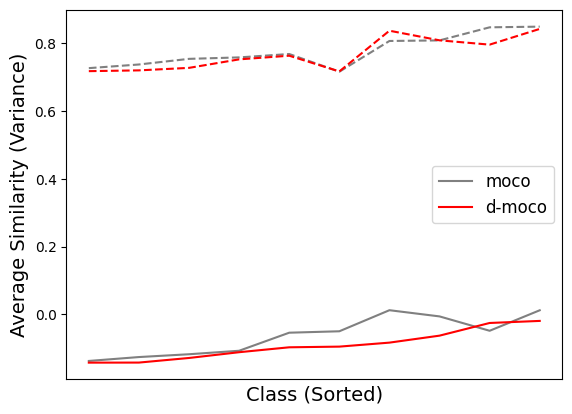}
    \caption{Class-wise metrics ($\rho_{\max}=0.5$)}
    \label{rep-05}
    \end{subfigure}%
    \begin{subfigure}[ht]{0.3\textwidth}
    \includegraphics[width=\textwidth]{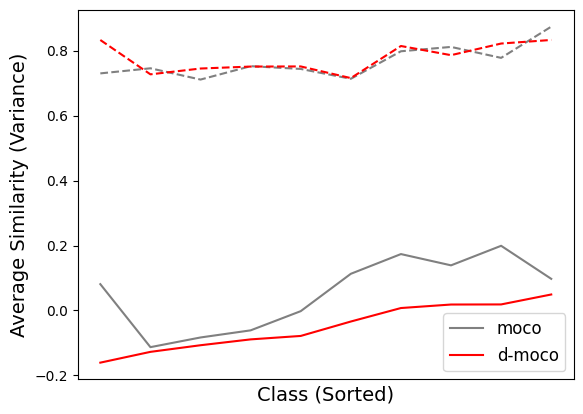}
    \caption{Class-wise metrics ($\rho_{\max}=0.75$)}
    \label{rep-075}
    \end{subfigure}%
    \end{center}
    \caption{This visualization shows the intra-class variance (dotted) and the inter-class similarity (solid) for each class. The classes are rearranged for the convenience of comparison.}
    \label{intra-inter}
\end{figure}

In Figure \ref{rep-05} and Figure \ref{rep-075}, the top two plots represent the intra-class variance, while the bottom two plots represent the inter-class similarity. Overall, the intra-class variance values show similar trends between the two methods. This indicates that the DUEL policy does not significantly affect the \textit{concentration} of the learned representations. However, in terms of inter-class similarity, D-MoCo consistently yielded lower values compared to MoCo in almost all cases. This suggests that D-MoCo effectively extracts representations by ensuring sufficient separation between the clusters that form each class.

The figure \ref{class_frequency} shows the frequencies of the classes stored in memory, sorted in descending order. It can be seen that the information of the dominant class, shown on the far left, is progressively suppressed as data is entered.

\begin{figure}[ht]
    \begin{center}
    \begin{subfigure}[b]{0.33\textwidth}
    \includegraphics[width=\textwidth]{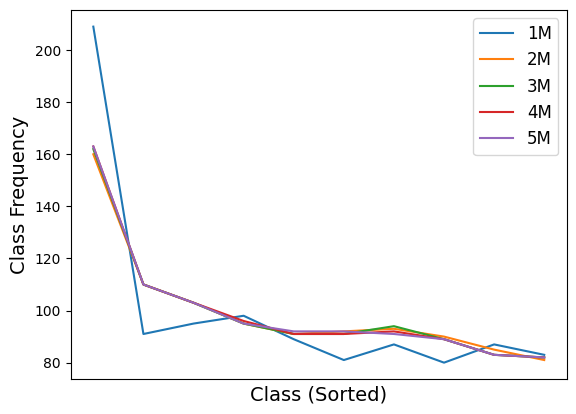}
    \caption{Class frequency ($\rho_{\max}=0.25$).}
    \end{subfigure}%
    \begin{subfigure}[b]{0.33\textwidth}
    \includegraphics[width=\textwidth]{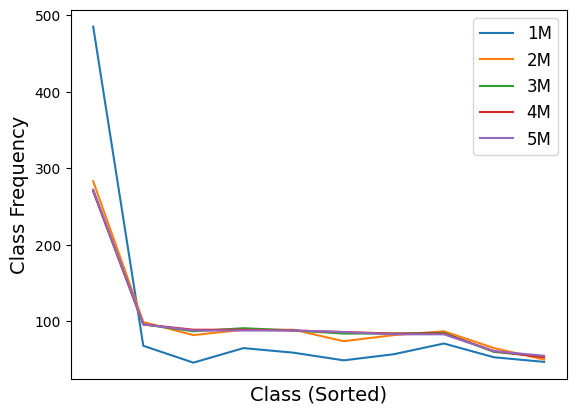}
    \caption{Class frequency ($\rho_{\max}=0.5$).}
    \end{subfigure}%
    \begin{subfigure}[b]{0.33\textwidth}
    \includegraphics[width=\textwidth]{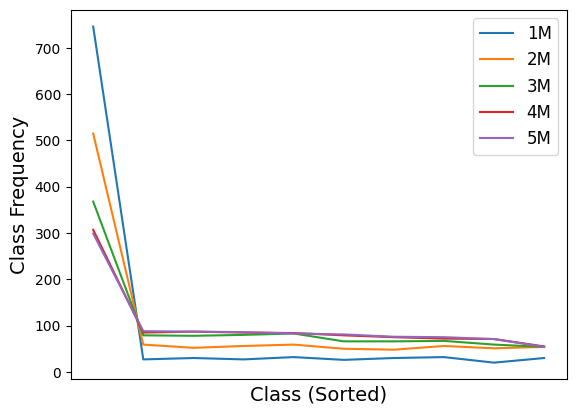}
    \caption{Class frequency ($\rho_{\max}=0.75$).}
    \end{subfigure}
    \end{center}
    \caption{Visualization of class frequency shifts in memory in various class-imbalanced settings. The classes are rearranged for the convenience of comparison.}
    \label{class_frequency}
\end{figure}

\clearpage
\subsection{F.3. Ablation Study: Choosing Negative Samples for D-SimCLR}

The SimCLR model used different data from the same minibatch as negative samples. However, due to the DUEL framework's additional use of elements within memory, we perform an ablation study of different methods for selecting negative samples. To do this, we experiment by varying two criteria during the study: epsilon, which indicates whether the numerator term is added to the denominator term of the InfoNCE loss, and the methods for obtaining negative samples. For the negative sample acquisition methods, we use three approaches: "batch only", "memory only", and a combination of the two, referred to as "mixed". To minimize the cost incurred by gradient operations on data within the memory, we applied a stop-gradient operation to features extracted from the data, preventing additional computations.

\begin{table}[ht]
    \renewcommand*{\arraystretch}{1.25}
    \begin{center}
        \begin{adjustbox}{width=0.65\textwidth}
            \begin{tabular}{ccc|cccc}
            \Xhline{3\arrayrulewidth}
            \multicolumn{3}{c|}{\bf Model Structure}&\multicolumn{4}{c}{\bf Class Probability $\rho_{\max}{\scriptstyle (\rho_{\min})}$}\\
            Structure&Negatives&$\epsilon$&0.1 {\small(0.1)}&0.25 {\small(0.083)}&0.5 {\small(0.056)}&0.75 {\small(0.028)}\\
            \hline
            SimCLR&batch only&1&$82.28{\scriptstyle\pm1.26}$&$78.90{\scriptstyle\pm1.60}$&$76.67{\scriptstyle\pm1.96}$&$72.81{\scriptstyle\pm1.39}$\\
            \hline
            \multirow{3}{*}{D-SimCLR}&memory only&0&$71.26{\scriptstyle\pm1.77}$&$71.10{\scriptstyle\pm0.28}$&$67.42{\scriptstyle\pm1.65}$&$60.27{\scriptstyle\pm1.07}$\\
            &mixed&0&$81.00{\scriptstyle\pm1.09}$&$80.43{\scriptstyle\pm2.08}$&$77.93{\scriptstyle\pm1.18}$&$74.85{\scriptstyle\pm3.25}$\\
            &mixed&1&${\bf82.82}{\scriptstyle\pm1.10}$&${\bf82.37}{\scriptstyle\pm1.53}$&${\bf79.56}{\scriptstyle\pm2.61}$&${\bf75.17}{\scriptstyle\pm3.49}$\\
            \Xhline{3\arrayrulewidth}
            \end{tabular}
        \end{adjustbox}
    \end{center}
    \caption{Downstream task accuracies for ablation study with various model structures. (CIFAR-10, 3 times, \%, Linear Probing (Top-1))}
    \label{ablation}
\end{table}

Table \ref{ablation} shows the results for different maximum class probabilities on the CIFAR-10 dataset. It confirms that the best performance is achieved when both data from the same batch and data sampled from memory are used as negatives. Therefore, we use this optimal setting consistently in other experiments.

\subsection{F.4. Quantitative Result: ImageNet-LT}

To evaluate the applicability of the DUEL framework in more realistic settings, we conducted additional experiments in a more complicated environment. The ImageNet-LT dataset defines the class distribution as a long-tailed distribution to simulate real-world environments. Moreover, since ImageNet is a dataset for a classification task with 1000 classes, experiments on this dataset provide sufficient validation for the scalability of our model. For experimental verification, we evaluate the linear probing performance on several datasets with different tasks.

\begin{table}[ht]
    \renewcommand*{\arraystretch}{1.25}
    \begin{center}
        \begin{adjustbox}{width=0.6\columnwidth}
            \begin{tabular}{c|cccc}
            \Xhline{3\arrayrulewidth}
            Datasets&MoCo&D-MoCo&SimCLR&D-SimCLR\\
            \hline
            Tiny-ImageNet (train)&42.61$\scriptstyle(\pm0.83)$&\bf46.64$\scriptstyle(\pm0.60)$&63.17$\scriptstyle(\pm1.63)$&\bf65.84$\scriptstyle(\pm0.28)$\\
            STL-10 (test)&82.47$\scriptstyle(\pm0.31)$&\bf86.19$\scriptstyle(\pm1.43)$&98.90$\scriptstyle(\pm0.73)$&\bf99.31$\scriptstyle(\pm0.01)$\\
            ImageNet (val)&86.59$\scriptstyle(\pm0.20)$&\bf88.59$\scriptstyle(\pm1.65)$&99.94&99.94\\
            \Xhline{3\arrayrulewidth}
            \end{tabular}
        \end{adjustbox}
    \end{center}
    \caption{Downstream task accuracies via linear probing procotol. (ImageNet-LT, 3 times, \%, Linear Probing (Top-1))}
    \label{expimagenetlt}
\end{table}

Table \ref{expimagenetlt} compares the performance of the proposed DUEL framework with that of the original models on these datasets. In most cases, the DUEL framework shows similar or better performance than the original models. This implies that the representations extracted by the DUEL framework are more robust for general tasks.

\end{document}